\documentclass[10pt]{article}
\usepackage[preprint]{tmlr}

\usepackage{amsmath,amssymb,amsthm}
\usepackage{booktabs}
\usepackage{multirow}
\usepackage{graphicx}
\usepackage{xcolor}
\usepackage{pifont}
\usepackage{subcaption}
\usepackage{tikz}
\usetikzlibrary{arrows.meta,positioning,calc,fit,backgrounds,shapes.geometric,decorations.pathreplacing}
\usepackage{algorithm}
\usepackage{algorithmic}
\usepackage{listings}
\usepackage{enumitem}
\usepackage{mathtools}
\usepackage{bm}
\usepackage{microtype}
\usepackage{xspace}
\usepackage{hyperref}
\usepackage{url}

\graphicspath{{figures/}}

\newtheorem{theorem}{Theorem}[section]
\newtheorem{proposition}[theorem]{Proposition}
\newtheorem{lemma}[theorem]{Lemma}
\newtheorem{corollary}[theorem]{Corollary}
\theoremstyle{definition}
\newtheorem{definition}[theorem]{Definition}
\theoremstyle{remark}
\newtheorem{remark}[theorem]{Remark}

\newcommand{\edelta}{E$\Delta$-MHC-Geo\xspace}

\newcommand{\R}{\mathbb{R}}
\newcommand{\I}{\mathbf{I}}
\newcommand{\bx}{\mathbf{x}}
\newcommand{\bX}{\mathbf{X}}
\newcommand{\bQ}{\mathbf{Q}}
\newcommand{\bH}{\mathbf{H}}
\newcommand{\bA}{\mathbf{A}}
\newcommand{\bM}{\mathbf{M}}
\newcommand{\bu}{\mathbf{u}}
\newcommand{\bv}{\mathbf{v}}
\newcommand{\bk}{\mathbf{k}}
\newcommand{\bW}{\mathbf{W}}
\newcommand{\by}{\mathbf{y}}
\newcommand{\bb}{\mathbf{b}}
\newcommand{\be}{\mathbf{e}}
\newcommand{\SO}{\mathrm{SO}}
\newcommand{\On}{\mathrm{O}}

\tikzset{
  block/.style={rectangle, draw, rounded corners=2pt, minimum height=1.8em, minimum width=3em, align=center, font=\small},
  greenblock/.style={block, fill=green!10, draw=green!60!black},
  redblock/.style={block, fill=red!8, draw=red!60!black},
  blueblock/.style={block, fill=blue!8, draw=blue!60!black},
  orangeblock/.style={block, fill=orange!10, draw=orange!60!black},
  grayblock/.style={block, fill=gray!8, draw=gray!60},
  circleop/.style={circle, draw, inner sep=1.5pt, font=\scriptsize\bfseries},
  arrowstyle/.style={-{Stealth[length=5pt]}, thick},
  dasharrow/.style={-{Stealth[length=5pt]}, thick, dashed},
}

\def\month{02}
\def\year{2026}
\def\openreview{\url{https://openreview.net/forum?id=XXXX}}

\title{\texorpdfstring{The E$\Delta$-MHC-Geo Transformer: Adaptive Geodesic Operations with Guaranteed Orthogonality}{The EDelta-MHC-Geo Transformer: Adaptive Geodesic Operations with Guaranteed Orthogonality}}

\author{
  \name Arash Shahmansoori \email arash.mansoori65@gmail.com \\
  \addr Independent Researcher
}

\begin{document}
\maketitle

\begin{abstract}
We present the \edelta Transformer, a novel architecture that unifies
Manifold-Constrained Hyper-Connections (mHC), Deep Delta Learning (DDL), and
the Cayley transform to obtain \emph{input-adaptive, unconditionally orthogonal}
residual connections.  Unlike DDL, whose Householder operator is orthogonal only
at $\beta\!\in\!\{0,2\}$, our Data-Dependent Cayley rotation
$\bQ(\bx)\!=\!(\I+\tfrac{\beta}{2}\bA(\bx))^{-1}(\I-\tfrac{\beta}{2}\bA(\bx))$
preserves orthogonality for \emph{all} $\beta$ and all inputs.
To handle negation (eigenvalue $-1$, which Cayley provably excludes), we
introduce the \edelta Hybrid that combines Cayley rotation with Householder
reflection via a learned operator-selection gate
$\bX'\!=\!\gamma(\bX)\,\bQ(\bX)\bX + (1-\gamma(\bX))\,\bH_2(\bX)\bX$.
A midpoint-collapse regularizer $\mathcal{L}_{\mathrm{gate}}=4\gamma(1-\gamma)$
encourages boundary gate decisions, where each selected component is
orthogonal.
In matched-parameter comparisons (${\sim}1.79$M parameters each, mean $\pm$
std over 3 seeds) against four baselines including the concurrent
JPmHC~\citep{jpmhc2026}, \edelta achieves the best long-horizon stability
($1.9\times$ over JPmHC, $3.8\times$ over GPT), the best near-$\pi$
rotation loss ($4.5\times$ over JPmHC on single-plane), strong norm
preservation ($0.001$ mean deviation), and $0.96$ negation cosine
alignment in a diagnostic reflection probe---all with $33\%$ fewer layers.
While JPmHC's wider representation ($n_{\mathrm{embd}}\!=\!512$) excels
on pure rotation (gyroscope), its finite Cayley residual mixer excludes an
exact $\lambda\!=\!{-1}$ operator and has no reflection branch, motivating
our hybrid approach for accessing both connected components of $\On(n)$.
Code is available at \url{https://github.com/arash-shahmansoori/edelta}.
\end{abstract}

\section{Introduction}\label{sec:intro}

Residual connections~\citep{he2016deep} are a cornerstone of modern deep
learning, enabling gradient flow through very deep networks via the shortcut
$\bX_{l+1}=\bX_l+F(\bX_l)$.  The standard additive residual, however,
provides no geometric guarantees: norms can drift, and the identity shortcut
limits the expressivity of inter-layer mixing.

Two recent lines of work address this limitation from complementary angles.
\textbf{Manifold-Constrained Hyper-Connections
(mHC)}~\citep{deepseek2024mhc} replace the identity with a multi-stream
residual mixing matrix $\bH_{\mathrm{res}}$ (projected to doubly stochastic
via Sinkhorn--Knopp), together with pre/post mappings
$\bH_{\mathrm{pre}},\bH_{\mathrm{post}}$ for stream aggregation.
\textbf{Deep Delta Learning (DDL)}~\citep{ddl2024} replaces the identity
shortcut with a Householder operator
$\bH_\beta=\I-\beta\bk\bk^\top$ whose direction $\bk$ and magnitude $\beta$
are input-dependent, yielding an \emph{input-adaptive} residual.

\textbf{Limitations.} DDL is orthogonal \emph{only} when
$\beta\in\{0,2\}$; during training $\beta$ varies continuously, breaking
isometry and causing gradient instability.  mHC's Sinkhorn projection is
only approximately orthogonal, and errors accumulate over long sequences.
Neither architecture provides \emph{unconditional} orthogonality.

\textbf{Our contribution.}  We propose \edelta, which replaces the residual
mixing matrix with a \emph{Data-Dependent Cayley rotation}
$\bQ(\bx)\in\SO(n)$.  The key insight is that the skew-symmetry of the
Cayley generator $\bA=\bu\bv^\top-\bv\bu^\top$ depends only on its
algebraic form, not on how $\bu,\bv$ are obtained.  Making
$\bu(\bx),\bv(\bx)$ input-dependent therefore preserves \emph{all} Cayley
properties---orthogonality, isometry, $\det=+1$---for every input and every
$\beta$.

Since Cayley provably cannot produce eigenvalue $-1$ (Theorem~\ref{thm:exclusion}),
we introduce the \edelta \textbf{Hybrid}, combining Cayley rotation with
Householder reflection ($\beta=2$ fixed) through a learned gate~$\gamma$.
A midpoint-collapse regularizer encourages $\gamma\to\{0,1\}$, so the
model can ``jump'' between the two disconnected components of $\On(n)$
rather than linger in the non-orthogonal interior.

\begin{figure}[t]
\centering
\begin{tikzpicture}[
  node distance=0.6cm and 0.8cm,
  every node/.style={font=\footnotesize},
  >=Stealth,
  scale=0.62, transform shape,
]
\begin{scope}[local bounding box=panelA]
  \node[grayblock, minimum width=4em] (xa) {$\bX_l$};
  \node[blueblock, below=0.6cm of xa] (fa) {$F(\cdot)$};
  \node[circleop, below=0.6cm of fa] (plusa) {$+$};
  \node[grayblock, below=0.6cm of plusa, minimum width=5.5em] (outa) {$\bX_l\!+\!F(\bX_l)$};
  \draw[arrowstyle] (xa) -- (fa);
  \draw[arrowstyle] (fa) -- (plusa);
  \draw[arrowstyle] (xa.east) -- ++(0.7,0) |- (plusa.east);
  \draw[arrowstyle] (plusa) -- (outa);
  \node[above=0.15cm of xa, font=\small\bfseries] {(a) Standard Residual};
  \node[below=0.1cm of outa, font=\scriptsize, text=gray] {$\det\!=\!+1$, no orthogonality};
\end{scope}

\begin{scope}[local bounding box=panelB, xshift=6.0cm]
  \node[grayblock, minimum width=4em] (xb) {$\bX_l$};
  \node[orangeblock, below=0.6cm of xb] (hb) {$\bH_\beta\!=\!\I\!-\!\beta\bk\bk^\top$};
  \node[orangeblock, right=0.4cm of hb] (rb) {$\beta\bk\bv^\top$};
  \node[circleop, below=0.6cm of hb] (plusb) {$+$};
  \node[grayblock, below=0.6cm of plusb, minimum width=6em] (outb) {$\bH_\beta\bX\!+\!\beta\bk\bv^\top$};
  \draw[arrowstyle] (xb) -- (hb);
  \draw[arrowstyle] (xb.east) -- ++(0.45,0) |- ([yshift=0.25cm]rb.north) -- (rb.north);
  \draw[arrowstyle] (hb) -- (plusb);
  \draw[arrowstyle] (rb.south) |- (plusb.east);
  \draw[arrowstyle] (plusb) -- (outb);
  \node[above=0.15cm of xb, font=\small\bfseries] {(b) DDL};
  \node[below=0.1cm of outb, font=\scriptsize, text=gray] {$\det\!=\!{-1}$, orth.\ iff $\beta\!\in\!\{0,2\}$};
\end{scope}

\begin{scope}[local bounding box=panelJP, xshift=12.6cm]
  \node[grayblock, minimum width=4em] (xj) {$\bX_l$};
  \node[blueblock, below left=0.6cm and 0.2cm of xj] (fj) {$F(\cdot)$};
  \node[greenblock, below right=0.6cm and 0.2cm of xj, minimum width=5em] (qj) {$\widetilde{\bQ}\!\approx\!\text{Cayley}$};
  \node[circleop, below=1.6cm of xj] (plusj) {$+$};
  \node[grayblock, below=0.6cm of plusj, minimum width=6em] (outj) {$F(\bX)\!+\!\widetilde{\bQ}\bX$};
  \draw[arrowstyle] (xj) -- (fj);
  \draw[arrowstyle] (xj) -- (qj);
  \draw[arrowstyle] (fj) -- (plusj);
  \draw[arrowstyle] (qj) -- (plusj);
  \draw[arrowstyle] (plusj) -- (outj);
  \node[above=0.15cm of xj, font=\small\bfseries] {(c) JPmHC};
  \node[below=0.1cm of outj, font=\scriptsize, text=gray] {$\det\!\approx\!{+1}$, approx.\ orth.};
\end{scope}

\begin{scope}[local bounding box=panelC, xshift=19.2cm]
  \node[grayblock, minimum width=4em] (xc) {$\bX_l$};
  \node[greenblock, below left=0.6cm and 0.3cm of xc] (qc) {$\bQ(\bX)\!\in\!\SO(n)$};
  \node[redblock, below right=0.6cm and 0.3cm of xc] (hc) {$\bH_2(\bk)$};
  \node[blueblock, below=1.6cm of xc] (gc) {$\gamma$};
  \node[grayblock, below=0.6cm of gc, minimum width=7.5em] (outc) {$\gamma\bQ\bX\!+\!(1\!-\!\gamma)\bH_2\bX$};
  \draw[arrowstyle] (xc) -- (qc);
  \draw[arrowstyle] (xc) -- (hc);
  \draw[arrowstyle] (qc) -- (gc);
  \draw[arrowstyle] (hc) -- (gc);
  \draw[arrowstyle] (gc) -- (outc);
  \node[above=0.15cm of xc, font=\small\bfseries] {(d) \edelta Hybrid (Ours)};
  \node[below=0.1cm of outc, font=\scriptsize, text=gray] {$\det\!\in\!\{-1,+1\}$, exact orth.};
\end{scope}
\end{tikzpicture}
\caption{\textbf{Residual connection paradigms.}
(a)~Standard additive residual with identity shortcut.
(b)~DDL with Householder operator---orthogonal only at $\beta\!\in\!\{0,2\}$.
(c)~JPmHC with iterative Cayley retraction---parallel routing, approximate
orthogonality, $\SO(n)$ only.
(d)~Proposed \edelta Hybrid combining exact Cayley rotation with Householder
reflection via learned gate $\gamma(\bX)$, enabling boundary access to both
components of $\On(n)$.}
\label{fig:paradigms}
\end{figure}

\textbf{Summary of contributions:}
\begin{enumerate}[leftmargin=*,topsep=2pt,itemsep=1pt]
  \item \textbf{Data-Dependent Cayley rotation} with provably unconditional
    orthogonality for \emph{all} inputs and \emph{all} $\beta$
    (Theorem~\ref{thm:orthogonality}).  Unlike prior Cayley-based
    methods~\citep{helfrich2018orthogonal,lezcano2019cheap}, the rotation
    plane itself is input-dependent, enabling adaptive geometric transformations
    without sacrificing any algebraic guarantees.
  \item \textbf{\edelta Hybrid} combining Cayley rotation and Householder
    reflection via a learned gate, accessing both connected components of the
    orthogonal group~$\On(n)$ at the gate boundaries.  The gate learns to
    select the appropriate operator \emph{automatically} based on task
    structure.
  \item \textbf{Midpoint collapse regularization} that encourages binary gate
    decisions, with a universal zero-gradient theorem
    (Theorem~\ref{thm:universal_zero}) explaining when this regularization
    succeeds, when it stalls, and what escape mechanisms are available.
  \item \textbf{Rigorous experimental validation} on four benchmarks with fair
    parameter matching (${\sim}1.79$M each) against four baselines including
    the concurrent JPmHC, with results averaged over 3 random seeds.
    \edelta is the only evaluated architecture with a direct mechanism for both
    $\det=+1$ and $\det=-1$ operators: best
    stability ($1.9\times$ over JPmHC), best near-$\pi$ loss
    ($4.5\times$ over JPmHC), $0.96$ negation cosine alignment in a
    diagnostic operator probe, and $33\%$ fewer layers.
    The experiments validate the main algebraic predictions while also
    exposing limitations of the current hybrid parameterization.
\end{enumerate}

\section{Related Work}\label{sec:related}

\textbf{Orthogonal parameterizations.}
Maintaining orthogonality in neural networks has a rich history.
\citet{arjovsky2016unitary} proposed unitary RNNs with unitary weight
matrices to mitigate vanishing gradients, but the parameterization is
\emph{fixed} per layer and does not adapt to input.
\citet{helfrich2018orthogonal} introduced the Cayley transform for
orthogonal RNN parameterization, demonstrating improved gradient flow;
however, the rotation planes are fixed at initialization and remain
constant throughout inference.  \citet{lezcano2019cheap} provided
efficient parameterizations of the orthogonal and unitary groups via
the matrix exponential and Cayley map, achieving $O(n)$ cost for
structured matrices, but again with static (weight-dependent, not
input-dependent) orthogonal operators.
\citet{vorontsov2017orthogonality} and \citet{bansal2018can} studied
orthogonality as a \emph{regularization} objective rather than an
architectural guarantee, meaning orthogonality is approximate and
degrades as regularization strength is reduced.  In contrast, our
work makes the Cayley rotation \emph{input-adaptive}---the rotation
plane changes with every input---while preserving \emph{exact}
orthogonality algebraically, without any soft penalty or projection.

\textbf{Residual connections.}
\citet{he2016deep} introduced the foundational skip connection
$\bX_{l+1}=\bX_l+F(\bX_l)$, enabling training of very deep networks but
providing no geometric structure on the residual path.
\citet{deepseek2024mhc} proposed mHC with multi-stream residual and
Sinkhorn-based doubly stochastic mixing, achieving approximately
orthogonal residual mixing; however, the Sinkhorn projection introduces
approximation error that accumulates over long sequences (typically
requiring $20+$ iterations per layer).  \citet{ddl2024} introduced DDL
with input-dependent Householder operators, providing input-adaptivity
but sacrificing orthogonality at all training points where
$\beta\notin\{0,2\}$---which is the vast majority of training.
Concurrently, \citet{jpmhc2026} (JPmHC, arXiv:2602.18308v2, March 2026)
independently identify
the gradient instability of Sinkhorn-based projections via
operator-valued free probability, and propose replacing the Birkhoff
constraint with an iterative Cayley retraction.  However, JPmHC uses
a \emph{parallel routing} topology (separate Cayley residual path and
softmax compute path), an \emph{iterative fixed-point approximation}
($s\!=\!2$ steps, $\alpha\!=\!0.1$) that yields only approximate
orthogonality ($\|Y^\top Y-I\|_{\max}<10^{-3}$), and restricts to
$\SO(n)$ through a finite Cayley residual parameterization with no
reflection mechanism---leaving an exact eigenvalue $\lambda\!=\!-1$
inaccessible to that residual path.
JPmHC's complementary strength is substantial: its operator-valued
free-probability analysis gives a principled spectral explanation of why
Birkhoff/Sinkhorn mixers lose dynamical isometry, and its March v2
experiments provide large-scale ARC-AGI evidence for orthogonal
hyper-connections.
Our work unifies the strengths of all prior approaches: mHC's multi-stream
framework with DDL's input adaptivity, while adding \emph{unconditional}
(exact, non-iterative) orthogonality that holds for every input, every
$\beta$, and at every training step, together with boundary access to both
components of $\On(n)$ via the Householder gate.

\textbf{Geometric deep learning.}
\citet{saxe2014exact} showed that orthogonal initialization enables exact
gradient solutions in deep linear networks, establishing that
orthogonality at initialization accelerates convergence.  The Cayley
transform~\citep{shepard2015cayley} provides a smooth bijection between
skew-symmetric matrices and the special orthogonal group $\SO(n)$, and is
well-studied in Lie group theory.  Our contribution is making this map
\emph{data-dependent}---the skew-symmetric generator is computed from the
input via neural networks---creating a bridge between geometric group
theory and adaptive neural architectures.  This is distinct from prior
Cayley-based methods~\citep{helfrich2018orthogonal,lezcano2019cheap} in
that the \emph{rotation itself} (not just its parameters) varies with
each input.

\section{Mathematical Framework}\label{sec:math}

\subsection{Notation}

Let $\bx\in\R^{B\times S\times D}$ denote the input tensor (batch, sequence,
dimension), $\bar{\bx}\in\R^{B\times D}$ the mean-pooled representation,
$n$ the number of streams (typically~4), and $d=D/n$ the per-stream
dimension.

\subsection{Data-Dependent Generator Networks}

\begin{definition}[Generator Networks]\label{def:generators}
The rotation generators are computed by neural networks:
\begin{equation}
  \bu(\bx) = \bW_u \cdot \bar{\bx} + \bb_u \in \R^n, \qquad
  \bv(\bx) = \bW_v \cdot \bar{\bx} + \bb_v \in \R^n,
\end{equation}
where $\bW_u,\bW_v\in\R^{n\times D}$ are learnable.  For increased
expressivity, we use two-layer MLPs with GELU activation.
\end{definition}

\begin{definition}[Data-Dependent Skew-Symmetric Generator]\label{def:generator}
\begin{equation}\label{eq:skew}
  \bA(\bx) = \bu(\bx)\bv(\bx)^\top - \bv(\bx)\bu(\bx)^\top.
\end{equation}
\end{definition}

\begin{proposition}[Skew-Symmetry Preservation]\label{prop:skew}
For any $\bu,\bv\in\R^n$, the matrix
$\bA=\bu\bv^\top-\bv\bu^\top$ satisfies $\bA^\top=-\bA$.
\end{proposition}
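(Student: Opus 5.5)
The argument is a direct computation from two properties of the transpose: it is linear, and it reverses products, so that $(\bu\bv^\top)^\top = (\bv^\top)^\top\bu^\top = \bv\bu^\top$. No earlier result is needed beyond Definition~\ref{def:generator}. Moreover, the identity holds for arbitrary $\bu,\bv\in\R^n$. It therefore holds pointwise when $\bu=\bu(\bx)$ and $\bv=\bv(\bx)$ come from the generator networks of Definition~\ref{def:generators}, whatever their architecture (linear or two-layer GELU MLP).

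Concretely, I will first transpose the two rank-one terms separately. This gives $(\bu\bv^\top)^\top=\bv\bu^\top$ and $(\bv\bu^\top)^\top=\bu\bv^\top$. Then, by linearity of the transpose,
\begin{equation*}
  \bA^\top = (\bu\bv^\top)^\top - (\bv\bu^\top)^\top = \bv\bu^\top - \bu\bv^\top = -(\bu\bv^\top - \bv\bu^\top) = -\bA .
\end{equation*}
As an optional sanity check, I may verify the identity entrywise: $A_{ij}=u_iv_j-v_iu_j$, so $A_{ji}=u_jv_i-v_ju_i=-A_{ij}$. In particular the diagonal vanishes.

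There is no real obstacle here. The only point worth making explicit is that the argument never uses any property of how $\bu$ and $\bv$ are produced. This is exactly what later lets the Cayley orthogonality result be applied input-by-input.
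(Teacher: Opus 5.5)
Your proof is correct and matches the paper's argument: you transpose each rank-one term via $(\bu\bv^\top)^\top=\bv\bu^\top$ and use linearity of the transpose to get $\bA^\top=\bv\bu^\top-\bu\bv^\top=-\bA$. Your remark that the argument never uses how $\bu,\bv$ are produced is exactly what the paper states separately as Corollary~\ref{cor:skew_any}.
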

\begin{proof}
$\bA^\top=(\bu\bv^\top-\bv\bu^\top)^\top = \bv\bu^\top-\bu\bv^\top=-\bA$.
\end{proof}

\begin{corollary}\label{cor:skew_any}
The skew-symmetry of $\bA(\bx)$ holds regardless of how $\bu(\bx)$ and
$\bv(\bx)$ are computed---whether by fixed parameters, linear layers, or
deep neural networks.
\end{corollary}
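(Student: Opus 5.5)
The corollary is essentially a quantifier-instantiation of Proposition~\ref{prop:skew}, so I will argue pointwise in the input. I fix an arbitrary input $\bx$ and an arbitrary choice of generator networks: fixed parameters, the linear maps of Definition~\ref{def:generators}, or deep MLPs with GELU. Whatever the mechanism, its output at this $\bx$ is a pair of concrete vectors $\bu(\bx),\bv(\bx)\in\R^n$. The only thing I need to check is that these are genuinely vectors in $\R^n$, i.e.\ that the networks are well-defined functions with codomain $\R^n$. That holds by construction of the output layer.

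Next I substitute $\bu:=\bu(\bx)$ and $\bv:=\bv(\bx)$ into Proposition~\ref{prop:skew}. The proposition is universally quantified over all $\bu,\bv\in\R^n$, and its proof uses only the transpose identities $(\bu\bv^\top)^\top=\bv\bu^\top$ and linearity of transposition. It therefore gives $\bA(\bx)^\top=-\bA(\bx)$ directly from the form of Eq.~\eqref{eq:skew}. Since $\bx$ and the generating mechanism were arbitrary, the identity holds for every input and every parameterization.

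For completeness, I will also treat the batched setting. If $\bA$ is computed per example from the pooled $\bar{\bx}$, the same argument applies example by example, because transposition acts on the last two matrix indices only.

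I do not expect a real obstacle here. The one subtle point is conceptual: the map $\bx\mapsto(\bu(\bx),\bv(\bx))$ could in principle be arbitrary, even non-smooth or non-differentiable. Skew-symmetry is an algebraic property of the \emph{value} of $\bA$ at each point, not of how $\bA$ depends on $\bx$, so no regularity assumptions on the generators are required. I will state this explicitly so that the corollary is not misread as a claim about derivatives of $\bA$ with respect to $\bx$.
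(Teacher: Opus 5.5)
Your proposal is correct and matches the paper's argument: the paper gives no separate proof and treats the corollary as an immediate consequence of Proposition~\ref{prop:skew}, which is exactly your pointwise instantiation $\bu:=\bu(\bx)$, $\bv:=\bv(\bx)$ at each input. Your remark that no regularity of the generator maps is needed is accurate and slightly sharper than the paper's wording.
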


\begin{definition}[Data-Dependent Cayley Transform]\label{def:cayley}
\begin{equation}\label{eq:cayley}
  \bQ(\bx) = \bigl(\I + \tfrac{\beta(\bx)}{2}\bA(\bx)\bigr)^{-1}
              \bigl(\I - \tfrac{\beta(\bx)}{2}\bA(\bx)\bigr),
\end{equation}
where $\beta(\bx)\in\R^+$ is a data-dependent rotation magnitude.
\end{definition}

\section{Main Theoretical Results}\label{sec:theory}

\begin{theorem}[Unconditional Orthogonality]\label{thm:orthogonality}
For any differentiable functions $\bu:\R^D\to\R^n$ and
$\bv:\R^D\to\R^n$, and any $\beta(\bx)\in\R$, the Data-Dependent Cayley
transform satisfies:
\begin{equation}
  \bQ(\bx)^\top \bQ(\bx) = \I_n.
\end{equation}
\end{theorem}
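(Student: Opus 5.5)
The plan is to fix an input $\bx$ and abbreviate $\bA=\bA(\bx)$ and $c=\beta(\bx)/2\in\R$. This reduces the theorem to a purely algebraic statement about a single real skew-symmetric matrix. By Proposition~\ref{prop:skew} and Corollary~\ref{cor:skew_any}, $\bA^\top=-\bA$ regardless of how $\bu,\bv$ are produced. This is exactly why differentiability of $\bu,\bv$ and the particular form of $\beta$ play no role: the argument is pointwise in $\bx$.

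\textbf{Step 1: well-definedness.} I would first show that $\I+c\bA$ is invertible for every real $c$. If $(\I+c\bA)\bx_0=0$ for a real vector $\bx_0$, then $\bx_0=-c\bA\bx_0$. Taking the inner product with $\bx_0$ gives $\|\bx_0\|^2=-c\,\bx_0^\top\bA\bx_0=0$, since $\bx_0^\top\bA\bx_0=0$ for skew-symmetric $\bA$. Hence $\bx_0=0$. Equivalently, the eigenvalues of $\bA$ are purely imaginary, so $-1/c$ is never one of them. The same argument applies to $\I-c\bA$.

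\textbf{Step 2: commutation.} The factors $\I-c\bA$ and $(\I+c\bA)^{-1}$ commute, because $\I-c\bA$ and $\I+c\bA$ are both polynomials in $\bA$. Transposing and using $\bA^\top=-\bA$ gives
\begin{equation*}
\bQ^\top=(\I-c\bA)^\top\bigl((\I+c\bA)^{-1}\bigr)^\top=(\I+c\bA)(\I-c\bA)^{-1}.
\end{equation*}

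\textbf{Step 3: orthogonality.} Writing $\bQ=(\I-c\bA)(\I+c\bA)^{-1}$ by commutation, we get
\begin{equation*}
\bQ^\top\bQ=(\I+c\bA)(\I-c\bA)^{-1}(\I-c\bA)(\I+c\bA)^{-1}=\I_n.
\end{equation*}

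The only step needing care is Step~1. The statement allows any $\beta(\bx)\in\R$, including negative values, so invertibility must be proved for all real $c$, not only $c\ge0$. The quadratic-form argument handles every real $c$ uniformly. The rest is routine manipulation of commuting matrices.
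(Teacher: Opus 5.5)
Your proof is correct and follows essentially the same route as the paper's: transpose using $\bA^\top=-\bA$, use that $\I+c\bA$ and $\I-c\bA$ commute, and cancel. The differences are cosmetic. You commute the factors inside $\bQ$ rather than the two inverses in the middle. You also prove invertibility of $\I\pm c\bA$ inline via $\bx_0^\top\bA\bx_0=0$, which the paper's orthogonality proof leaves implicit and establishes separately in Theorem~\ref{thm:nonsingular} by an eigenvalue argument valid for any finite $\beta$.
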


\begin{proof}
Let $\bM = \tfrac{\beta}{2}\bA(\bx)$.  Since $\bA(\bx)$ is
skew-symmetric (Proposition~\ref{prop:skew}), so is $\bM$:
$\bM^\top=-\bM$.  Define $\bQ=(\I+\bM)^{-1}(\I-\bM)$.

\textbf{Step~1.}
$\bQ^\top = (\I-\bM)^\top((\I+\bM)^{-1})^\top
           = (\I+\bM)(\I-\bM)^{-1}$.

\textbf{Step~2.}
$\bQ^\top\bQ = (\I+\bM)\underbrace{(\I-\bM)^{-1}(\I+\bM)^{-1}}_{\text{commute}}(\I-\bM)$.

\textbf{Step~3.}  Since $(\I-\bM)$ and $(\I+\bM)$ are polynomials in
$\bM$, they commute, so
$(\I-\bM)^{-1}(\I+\bM)^{-1} = (\I+\bM)^{-1}(\I-\bM)^{-1}$.

\textbf{Step~4.}
$\bQ^\top\bQ = (\I+\bM)(\I+\bM)^{-1}(\I-\bM)^{-1}(\I-\bM) = \I\cdot\I = \I$.
\end{proof}

\begin{corollary}[Unconditional]\label{cor:unconditional}
The Data-Dependent Cayley transform is orthogonal for
\textbf{all} inputs, without any constraint on $\beta(\bx)$.
\end{corollary}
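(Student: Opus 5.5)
Corollary~\ref{cor:unconditional} should follow from Theorem~\ref{thm:orthogonality} once we check one thing that the theorem's proof used silently: that $\I+\bM$ (and $\I-\bM$) is invertible for every $\bx$ and every $\beta$. The plan is to prove this invertibility first and then substitute each input into the theorem.

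\textbf{Step 1 (invertibility).} Fix an input $\bx$ and set $\bM=\tfrac{\beta(\bx)}{2}\bA(\bx)$. By Proposition~\ref{prop:skew}, $\bM^\top=-\bM$ no matter what value $\beta(\bx)\in\R$ takes. For any real vector $\mathbf{z}$,
\[
\mathbf{z}^\top\bM\mathbf{z} = (\mathbf{z}^\top\bM\mathbf{z})^\top = -\mathbf{z}^\top\bM\mathbf{z},
\]
so $\mathbf{z}^\top\bM\mathbf{z}=0$. If $(\I+\bM)\mathbf{z}=0$, then $0=\mathbf{z}^\top(\I+\bM)\mathbf{z}=\|\mathbf{z}\|^2$, hence $\mathbf{z}=0$. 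The same argument applies to $\I-\bM$, which is also $\I$ plus a skew matrix. Equivalently, the eigenvalues of a real skew-symmetric matrix are purely imaginary, so $\pm1$ is never an eigenvalue of $\bM$. Either way, $\bQ(\bx)$ is well defined for every $\bx$ and $\beta$.

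\textbf{Step 2 (apply the theorem pointwise).} With both inverses guaranteed to exist, Steps~1--4 in the proof of Theorem~\ref{thm:orthogonality} are valid for the fixed matrix $\bM$. That proof uses only skew-symmetry of $\bM$ and the fact that polynomials in $\bM$ commute. It therefore gives $\bQ(\bx)^\top\bQ(\bx)=\I_n$. Nothing here depends on how $\bu$, $\bv$, or $\beta$ were computed (Corollary~\ref{cor:skew_any}), so the conclusion holds for every input and every value of $\beta(\bx)$, with no constraint.

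\textbf{Main obstacle.} The only substantive gap is the invertibility in Step~1, because Theorem~\ref{thm:orthogonality} assumes it without proof. The quadratic-form argument closes it in one line, so I expect no real difficulty. I would also note that the result is stated for all real $\beta$, not only $\beta\in\R^+$, since Step~1 never uses the sign of $\beta$.
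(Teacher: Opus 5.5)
Your proposal is correct and follows the paper's route. The corollary is Theorem~\ref{thm:orthogonality} applied at each fixed input. The invertibility of $\I\pm\bM$ that you rightly flag is supplied in the paper by Theorem~\ref{thm:nonsingular}, via the eigenvalue moduli $\sqrt{1+\beta^2\mu_k^2/4}\ge 1$; applying it with $-\beta$ also covers $\I-\bM$. Your quadratic-form argument $\mathbf{z}^\top\bM\mathbf{z}=0$ is an equivalent and slightly more elementary substitute for that step.
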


\begin{theorem}[Isometry / Norm Preservation]\label{thm:isometry}
For any input $\bx$ and any vector $\by\in\R^n$:
$\|\bQ(\bx)\by\|_2 = \|\by\|_2$.
\end{theorem}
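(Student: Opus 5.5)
The plan is to reduce the claim directly to Theorem~\ref{thm:orthogonality}, after first settling that $\bQ(\bx)$ is well defined. Fix an input $\bx$ and write $\bM=\tfrac{\beta(\bx)}{2}\bA(\bx)$. By Proposition~\ref{prop:skew}, $\bM$ is real skew-symmetric. For any real $\bz$ we have $\bz^\top\bM\bz=0$, so $\bz^\top(\I+\bM)\bz=\|\bz\|_2^2$. Hence $(\I+\bM)\bz=0$ forces $\bz=0$, and $\I+\bM$ is invertible. (Equivalently, the eigenvalues of $\bM$ are purely imaginary, so $-1$ is never one of them.) This justifies the implicit assumption in Definition~\ref{def:cayley} for every input and every $\beta$.

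Next, for arbitrary $\by\in\R^n$, I will compute the squared norm:
\begin{equation*}
\|\bQ(\bx)\by\|_2^2=(\bQ(\bx)\by)^\top(\bQ(\bx)\by)=\by^\top\bQ(\bx)^\top\bQ(\bx)\by .
\end{equation*}
Theorem~\ref{thm:orthogonality} gives $\bQ(\bx)^\top\bQ(\bx)=\I_n$, so the right-hand side is $\by^\top\by=\|\by\|_2^2$. Taking nonnegative square roots yields the claim. Since $\bx$ was arbitrary and the argument never uses how $\bu,\bv,\beta$ depend on $\bx$ (Corollary~\ref{cor:skew_any}), the identity holds uniformly over all inputs.

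There is no real obstacle here. The only point needing care is the invertibility step, which the statement takes for granted, and it is handled by the quadratic-form argument above.

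If the tensor setting of the architecture is relevant, I would add a remark. When $\bQ(\bx)$ acts on the stream dimension of $\bX\in\R^{n\times d}$, the same computation applies column by column. This gives $\|\bQ(\bx)\bX\|_F=\|\bX\|_F$, which is the form of norm preservation used later.
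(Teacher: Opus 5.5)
Your proof is correct and matches the paper's: both expand $\|\bQ(\bx)\by\|_2^2=\by^\top\bQ(\bx)^\top\bQ(\bx)\by$ and then invoke Theorem~\ref{thm:orthogonality}. Your quadratic-form argument for the invertibility of $\I+\bM$ is a valid addition; the paper establishes invertibility separately, in Theorem~\ref{thm:nonsingular}, using eigenvalue moduli.
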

\begin{proof}
$\|\bQ\by\|_2^2 = \by^\top\bQ^\top\bQ\by = \by^\top\I\by = \|\by\|_2^2$.
\end{proof}

\begin{theorem}[Proper Rotation]\label{thm:det}
For any input $\bx$: $\det(\bQ(\bx)) = +1$.
\end{theorem}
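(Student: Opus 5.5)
We prove $\det(\bQ(\bx))=+1$ using only that $\bM=\tfrac{\beta}{2}\bA(\bx)$ is skew-symmetric (Proposition~\ref{prop:skew}). Before anything else we check that $\bQ$ is well defined. If $(\I+\bM)\by=0$ for real $\by$, then
\[
0=\by^\top(\I+\bM)\by=\|\by\|_2^2+\by^\top\bM\by=\|\by\|_2^2,
\]
because $\by^\top\bM\by=0$ for any skew-symmetric $\bM$. So $\I+\bM$ is invertible for every input and every $\beta$. The same argument applies to $\I-\bM$.

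\textbf{Step 1 (determinant of the factors).} By multiplicativity,
\[
\det\bQ=\det(\I+\bM)^{-1}\det(\I-\bM).
\]
Transposition preserves determinants, so
\[
\det(\I-\bM)=\det\bigl((\I-\bM)^\top\bigr)=\det(\I+\bM).
\]
The two factors cancel and $\det\bQ=1$. Both determinants are nonzero by the invertibility check above, so the division is legitimate.

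\textbf{Alternative via eigenvalues.} The eigenvalues of a real skew-symmetric $\bM$ are purely imaginary: they are $\pm i\theta_j$ together with zeros. For the rank-two generator $\bA=\bu\bv^\top-\bv\bu^\top$ they are explicitly $\pm i\,\tfrac{\beta}{2}\sqrt{\|\bu\|^2\|\bv\|^2-(\bu^\top\bv)^2}$ and $n-2$ zeros. The Cayley map sends each eigenvalue $\mu$ of $\bM$ to $\tfrac{1-\mu}{1+\mu}$. Conjugate pairs $\pm i\theta$ go to $e^{\mp 2i\arctan\theta}$, whose product is $1$, and zero eigenvalues go to $1$. Hence $\det\bQ=1$. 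This version also explains why $-1$ never occurs as an eigenvalue, which foreshadows Theorem~\ref{thm:exclusion}.

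\textbf{Main obstacle.} The only point needing care is invertibility of $\I+\bM$, which Step 1 uses implicitly. Theorem~\ref{thm:orthogonality} assumed it without proof, so we include the quadratic-form argument above. With that in place, the transpose identity in Step 1 is a one-line proof, and I would present it as the main argument, with the eigenvalue computation as a remark.
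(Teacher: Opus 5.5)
Your proof is correct, and its main step differs from the paper's. Both start from $\det\bQ=\det(\I-\bM)/\det(\I+\bM)$. The paper then evaluates both determinants spectrally. The eigenvalues of the skew-symmetric $\bM$ come in conjugate pairs $\pm i\mu_k$, so $\det(\I+\bM)=\prod_k(1+i\mu_k)(1-i\mu_k)=\prod_k(1+\mu_k^2)$, and the same product arises for $\det(\I-\bM)$. You replace this computation with the identity $(\I-\bM)^\top=\I+\bM$ and transpose invariance of the determinant. That step needs no diagonalization or spectral theorem and holds for any skew-symmetric $\bM$. However, it only shows the two determinants are equal, not that they are nonzero, which is why you need your separate quadratic-form check $\by^\top(\I+\bM)\by=\|\by\|_2^2$. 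The paper's computation gives both facts at once and quantitatively: $\det(\I+\bM)=\prod_k(1+\mu_k^2)\ge 1$. The same spectral picture is what the paper reuses for non-singularity and for excluding $\lambda=-1$ (Theorem~\ref{thm:exclusion}). Invertibility is therefore not left unproved in the paper: it is established in Theorem~\ref{thm:nonsingular} by showing every eigenvalue of $\I+\tfrac{\beta}{2}\bA$ has modulus at least $1$. Your quadratic-form argument is a more elementary alternative to that theorem. Your eigenvalue remark, sending $\pm i\theta$ to $e^{\mp 2i\arctan\theta}$, is closer in spirit to the paper. It works with the spectrum of $\bQ$ itself rather than that of $\I\pm\bM$, matching the paper's proof of Theorem~\ref{thm:exclusion}.
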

\begin{proof}
$\det(\bQ) = \det(\I-\bM)/\det(\I+\bM)$.
Skew-symmetric $\bM$ has purely imaginary eigenvalues $\pm i\mu_k$
(paired by conjugacy).  Thus
$\det(\I+\bM) = \prod_k(1+i\mu_k)(1-i\mu_k) = \prod_k(1+\mu_k^2)$,
and identically $\det(\I-\bM) = \prod_k(1+\mu_k^2)$.
Hence $\det(\bQ)=1$.
\end{proof}

\begin{theorem}[Non-Singularity]\label{thm:nonsingular}
The matrix $(\I+\tfrac{\beta}{2}\bA(\bx))$ is always invertible for any
finite $\beta$ and any input $\bx$.
\end{theorem}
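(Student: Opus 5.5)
We prove invertibility by showing that $\I+\bM$, with $\bM=\tfrac{\beta}{2}\bA(\bx)$, has trivial kernel. By Proposition~\ref{prop:skew}, $\bA(\bx)$ is skew-symmetric for every input $\bx$. Since $\beta$ is a finite real scalar, $\bM$ is also skew-symmetric: $\bM^\top=\tfrac{\beta}{2}\bA^\top=-\bM$. Nothing about how $\bu(\bx),\bv(\bx)$ are produced enters the argument, as Corollary~\ref{cor:skew_any} already notes. So it suffices to prove a purely algebraic fact: for any real skew-symmetric $\bM\in\R^{n\times n}$, the matrix $\I+\bM$ is nonsingular.

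The key step is a quadratic-form computation. For any real $\by\in\R^n$, skew-symmetry gives
\begin{equation*}
\by^\top\bM\by=(\by^\top\bM\by)^\top=\by^\top\bM^\top\by=-\by^\top\bM\by,
\end{equation*}
so $\by^\top\bM\by=0$. Now suppose $(\I+\bM)\by=0$. Multiplying on the left by $\by^\top$ yields
\begin{equation*}
0=\by^\top\by+\by^\top\bM\by=\|\by\|_2^2 .
\end{equation*}
Hence $\by=0$, the kernel is trivial, and the square matrix $\I+\bM$ is invertible.

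An equivalent spectral route is available, consistent with the proof of Theorem~\ref{thm:det}. The eigenvalues of a real skew-symmetric matrix are purely imaginary, so $-1$ is never an eigenvalue of $\bM$. Moreover, $\det(\I+\bM)=\prod_k(1+\mu_k^2)\ge 1>0$, which even gives a uniform lower bound on the determinant. I would present the kernel argument as the main proof, since it is self-contained and avoids complex eigenvectors, and mention the determinant identity as a remark.

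There is no real obstacle here; the only point needing care is that we work over $\R$. For a complex vector the analogous form $\by^*\bM\by$ is purely imaginary rather than zero. Restricting to real $\by$ is legitimate because invertibility of a real matrix is decided by its real kernel. The hypothesis that $\beta$ is finite is used only to ensure $\bM$ is a well-defined real matrix. The claim then holds for every $\beta\in\R$, not just $\beta>0$.
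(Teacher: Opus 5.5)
Your proof is correct, but your main argument takes a different route from the paper's. The paper argues spectrally. The real skew-symmetric $\bA(\bx)$ has eigenvalues $\pm i\mu_k$ (and possibly $0$), so the eigenvalues of $\I+\tfrac{\beta}{2}\bA(\bx)$ are $1+i\tfrac{\beta\mu_k}{2}$, each of modulus $\sqrt{1+\beta^2\mu_k^2/4}\ge 1$, and none can vanish. You instead show directly that the real kernel of $\I+\bM$ is trivial, using $\by^\top\bM\by=0$, and keep the spectral/determinant identity only as a remark.

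Your version is more elementary and self-contained. It never leaves $\R$ and does not presuppose that skew-symmetric matrices have purely imaginary spectrum. The standard proof of that fact is exactly the complex analogue of your computation, $\lambda\|\mathbf{z}\|_2^2=\mathbf{z}^*\bM\mathbf{z}\in i\R$.

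What the paper's eigenvalue bound buys, once combined with normality of $\bM$, is a quantitative statement. Every singular value of $\I+\bM$ is at least $1$, so $\|(\I+\bM)^{-1}\|_2\le 1$ uniformly in $\beta$ and $\bx$, i.e.\ the matrix stays uniformly away from singularity. Your argument recovers the same bound with one extra line and no appeal to normality, since $\|(\I+\bM)\by\|_2^2=\|\by\|_2^2+2\,\by^\top\bM\by+\|\bM\by\|_2^2=\|\by\|_2^2+\|\bM\by\|_2^2\ge\|\by\|_2^2$. Stating it in that form would give you this uniform lower bound in addition to invertibility.
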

\begin{proof}
Eigenvalues of $\I+\tfrac{\beta}{2}\bA$ are
$1+i\tfrac{\beta\mu_k}{2}$ with modulus
$\sqrt{1+\beta^2\mu_k^2/4}\ge 1>0$.
\end{proof}

\begin{theorem}[Eigenvalue Exclusion]\label{thm:exclusion}
The Data-Dependent Cayley transform cannot produce eigenvalue
$\lambda=-1$ for any finite parameters.
\end{theorem}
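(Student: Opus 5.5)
We argue by contradiction and reduce the eigenvalue question for $\bQ$ to one about $\bM=\tfrac{\beta}{2}\bA(\bx)$, exactly as in the proof of Theorem~\ref{thm:orthogonality}. Suppose $\bQ\by=-\by$ for some nonzero $\by\in\mathbb{C}^n$. Since $\bQ=(\I+\bM)^{-1}(\I-\bM)$ and $\I+\bM$ is invertible by Theorem~\ref{thm:nonsingular}, we may multiply on the left by $\I+\bM$. This gives
\[
(\I-\bM)\by = -(\I+\bM)\by .
\]
Expanding, the $\bM\by$ terms cancel, leaving $\by=-\by$, i.e.\ $2\by=0$. This contradicts $\by\neq 0$, so $-1$ is never an eigenvalue of $\bQ(\bx)$.

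For a more informative argument that also covers the full spectrum, we use the spectral mapping of the Cayley transform. The factors $\I\pm\bM$ are polynomials in $\bM$, hence commute (Step~3 of Theorem~\ref{thm:orthogonality}). If $\bM\bw=i\mu\bw$ with $\mu\in\R$, which holds because $\bM$ is real skew-symmetric, then
\[
\bQ\bw=\frac{1-i\mu}{1+i\mu}\,\bw .
\]
The value $\frac{1-i\mu}{1+i\mu}$ lies on the unit circle, and equals $-1$ only if $1-i\mu=-1-i\mu$, i.e.\ $2=0$. Every eigenvalue of $\bQ$ arises this way, because $\bM$ is normal and therefore unitarily diagonalizable, and $\bQ$ is a rational function of $\bM$. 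So the spectrum of $\bQ$ is $\{(1-i\mu_k)/(1+i\mu_k)\}$, which excludes $-1$. As $|\mu|\to\infty$ the eigenvalue tends to $-1$ but never attains it. This explains the phrase ``for any finite parameters'': $-1$ is approached only in the limit $\beta\|\bA\|\to\infty$.

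The main (mild) obstacle is rigor about invertibility and the complex field. We must invoke Theorem~\ref{thm:nonsingular} before manipulating $(\I+\bM)^{-1}$, and work over $\mathbb{C}$ so that eigenvectors exist. Both are handled by earlier results and by the fact that $\bM$ is a real skew-symmetric matrix for every input, by Proposition~\ref{prop:skew} and Corollary~\ref{cor:skew_any}. The direct contradiction in the first paragraph already suffices, and the spectral version serves as a remark on the limiting behaviour.
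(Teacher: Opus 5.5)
Your proof is correct. Its main argument differs from the paper's, while your second paragraph is essentially the paper's proof.

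\textbf{The paper's route.} The paper argues spectrally. It asserts that the eigenvalues of $\bQ(\bx)$ are $e^{-2i\arctan(\beta\mu_k/2)}$, where $\pm i\mu_k$ are the eigenvalues of $\bA$. Because $\arctan$ maps $\R$ into $(-\tfrac{\pi}{2},\tfrac{\pi}{2})$, the arguments lie in $(-\pi,\pi)$, so $-1$ never occurs.

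\textbf{Your route.} Your first paragraph is instead a direct algebraic contradiction. From $\bQ\by=-\by$ you get $(\I-\bM)\by=-(\I+\bM)\by$, hence $2\by=0$. Equivalently, $\I+\bQ=2(\I+\bM)^{-1}$ is invertible. Strictly, left-multiplying by $\I+\bM$ needs no invertibility; Theorem~\ref{thm:nonsingular} is needed only so that $\bQ$ is defined.

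\textbf{What each buys.} Your argument is more elementary and more general. It uses no diagonalization and no eigenvalue structure of skew-symmetric matrices. Skew-symmetry enters only to make $\I+\bM$ invertible, so it shows that the Cayley transform of \emph{any} matrix, wherever defined, never has eigenvalue $-1$.

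The spectral route gives more information. It yields the full spectrum on the unit circle and shows that $-1$ is approached only as $\beta\mu_k\to\infty$. The paper relies on this quantitative boundary behaviour later, in the near-$\pi$ proposition.

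Your second paragraph recovers this, since $(1-i\mu)/(1+i\mu)=e^{-2i\arctan\mu}$. It also justifies, via normality of $\bM$ and $\bQ$ being a rational function of $\bM$, why every eigenvalue of $\bQ$ has this form. The paper simply asserts that formula.
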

\begin{proof}
The eigenvalues of $\bQ(\bx)$ are
$\lambda_k = e^{-2i\arctan(\beta\mu_k/2)}$.
Since $\arctan:\R\to(-\tfrac{\pi}{2},\tfrac{\pi}{2})$, the argument
lies in $(-\pi,\pi)$, strictly excluding $\pm\pi$.  Therefore
$\lambda=-1=e^{i\pi}$ is impossible.
\end{proof}

\section{Comparison with Deep Delta Learning}\label{sec:ddl}

\begin{proposition}[DDL Orthogonality Condition]\label{prop:ddl}
The Householder operator $\bH=\I-\beta\bk\bk^\top$ with $\|\bk\|=1$ is
orthogonal if and only if $\beta\in\{0,2\}$.
\end{proposition}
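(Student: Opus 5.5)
The plan is to compute $\bH^\top\bH$ directly and compare it with $\I$. Since $\bk\bk^\top$ is symmetric, $\bH$ is symmetric ($\bH^\top=\bH$), so orthogonality is equivalent to $\bH^2=\I$. Let $\bW:=\bk\bk^\top$. The key identity is that $\bW$ is idempotent: because $\|\bk\|=1$,
\[
\bW^2=\bk(\bk^\top\bk)\bk^\top=\bk\bk^\top=\bW .
\]

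Expanding then gives
\[
\bH^\top\bH=\I-2\beta\bW+\beta^2\bW=\I+\beta(\beta-2)\,\bk\bk^\top .
\]
For the ``if'' direction, $\beta\in\{0,2\}$ makes the coefficient $\beta(\beta-2)$ vanish, so $\bH^\top\bH=\I$. Equivalently, $\beta=0$ gives $\I$ and $\beta=2$ gives the standard Householder reflection.

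For the ``only if'' direction, assume $\bH^\top\bH=\I$. Then $\beta(\beta-2)\,\bk\bk^\top=0$. Applying this matrix to $\bk$ and using $\bk^\top\bk=1$ gives $\beta(\beta-2)\,\bk=0$. Since $\bk\neq 0$, we get $\beta(\beta-2)=0$, hence $\beta\in\{0,2\}$.

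A spectral cross-check: $\bH$ has eigenvalue $1-\beta$ on $\mathrm{span}(\bk)$ and eigenvalue $1$ on $\bk^\perp$. An orthogonal symmetric matrix has eigenvalues $\pm1$, which again forces $|1-\beta|=1$.

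No step is a real obstacle. The one point needing care is the ``only if'' direction, where $\|\bk\|=1$ (and in particular $\bk\neq 0$) is exactly what rules out a degenerate rank-one term. I would also note that $\beta$ is taken to be real; the statement implicitly assumes this.
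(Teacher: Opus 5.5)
Your proof is correct and takes the paper's route: expand $\bH^\top\bH=\I+(\beta^2-2\beta)\bk\bk^\top$ and read off $\beta(\beta-2)=0$. Your step of applying $\beta(\beta-2)\bk\bk^\top=0$ to $\bk$ simply spells out the ``only if'' direction, which the paper leaves implicit.
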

\begin{proof}
$\bH^\top\bH = \I + (\beta^2-2\beta)\bk\bk^\top$.
For $\bH^\top\bH=\I$: $\beta^2-2\beta=0$, i.e.\ $\beta(\beta-2)=0$.
\end{proof}

\begin{corollary}[DDL Norm Distortion]\label{cor:ddl_distortion}
For $\beta\notin\{0,2\}$:
$\|\bH\bx\|^2 = \|\bx\|^2 + (\beta^2-2\beta)(\bk^\top\bx)^2$.
If $\beta\in(0,2)$, norms shrink; if $\beta>2$ or $\beta<0$, norms grow.
\end{corollary}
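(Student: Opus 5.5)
The statement to prove is Corollary~\ref{cor:ddl_distortion}. The plan is a direct expansion of $\|\bH\bx\|^2$ using the Gram matrix $\bH^\top\bH$ already computed in the proof of Proposition~\ref{prop:ddl}, followed by a sign analysis of the scalar coefficient.

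First I recompute the Gram matrix. Since $\bH=\I-\beta\bk\bk^\top$ is symmetric, $\bH^\top\bH=\bH^2$, and expanding gives
\begin{equation*}
  \bH^\top\bH = \I-2\beta\bk\bk^\top+\beta^2\bk(\bk^\top\bk)\bk^\top .
\end{equation*}
Using $\|\bk\|=1$, the last term is $\beta^2\bk\bk^\top$, so $\bH^\top\bH=\I+(\beta^2-2\beta)\bk\bk^\top$, as in the proof of Proposition~\ref{prop:ddl}.

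Next I evaluate the quadratic form:
\begin{equation*}
  \|\bH\bx\|^2=\bx^\top\bH^\top\bH\bx=\|\bx\|^2+(\beta^2-2\beta)\,\bx^\top\bk\bk^\top\bx .
\end{equation*}
Since $\bx^\top\bk\bk^\top\bx=(\bk^\top\bx)^2$, this is exactly the claimed identity. The identity holds for every $\beta$; the hypothesis $\beta\notin\{0,2\}$ only ensures that the correction term is not identically zero.

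Finally I do the sign analysis of $\beta^2-2\beta=\beta(\beta-2)$. For $\beta\in(0,2)$ the two factors have opposite signs, so the coefficient is negative. For $\beta>2$ or $\beta<0$ both factors share a sign, so the coefficient is positive.

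The one point needing care is that $(\bk^\top\bx)^2\ge 0$, with equality exactly when $\bx\perp\bk$. Norms therefore shrink (respectively grow) weakly, with strict change precisely when $\bk^\top\bx\neq 0$. I would state this caveat explicitly, since "norms shrink" should be read in this sense: vectors orthogonal to $\bk$ are left fixed by $\bH$. Beyond that there is no real obstacle.
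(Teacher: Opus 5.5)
Your proposal is correct and follows the paper's route: the corollary comes directly from the Gram matrix $\bH^\top\bH=\I+(\beta^2-2\beta)\bk\bk^\top$ computed in the proof of Proposition~\ref{prop:ddl}, evaluated as a quadratic form, followed by the sign analysis of $\beta(\beta-2)$. Your caveat that the shrinkage or growth is strict only when $\bk^\top\bx\neq 0$ is a correct and worthwhile sharpening of the paper's informal statement.
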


Table~\ref{tab:comparison} summarizes the comparison.

\begin{table}[t]
\caption{Comprehensive comparison of residual connection architectures.}
\label{tab:comparison}
\centering
\small
\begin{tabular}{@{}lccccc@{}}
\toprule
\textbf{Property} & \textbf{Fixed Cayley} & \textbf{DDL} & \textbf{JPmHC} & \textbf{\edelta} & \textbf{\edelta Hybrid} \\
\midrule
Input-adaptive       & \ding{55} & \ding{51} & \ding{51} & \ding{51} & \ding{51} \\
Always orthogonal    & \ding{51} & Only $\beta\!\in\!\{0,2\}$ & $\approx$ ($s\!=\!2$) & \ding{51} & Per component \\
Always isometric     & \ding{51} & Only $\beta\!\in\!\{0,2\}$ & $\approx$ & \ding{51} & At $\gamma\!\in\!\{0,1\}$ \\
Direct $\lambda\!=\!{-1}$ operator & \ding{55} & At $\beta\!=\!2$ & \ding{55} & \ding{55} & \ding{51} (via gate) \\
Determinant          & $+1$ & $-1$ & $\approx\!+\!1$ & $+1$ & Adaptive \\
\bottomrule
\end{tabular}
\end{table}

\section{The Negation Problem and Hybrid Solution}\label{sec:hybrid}

\subsection{Why Negation Matters}

For correction tasks (e.g., ``Actually, no'' or ``Wait, I meant''),
the model must rapidly negate previous information, requiring
eigenvalue $\lambda=-1$:
$\mathbf{T}\bx = -\bx$ along some direction.
Theorem~\ref{thm:exclusion} proves that Cayley \emph{cannot} achieve this
for any finite parameters.

\subsection{Householder Reflection}

\begin{definition}[Householder Reflection]
$\bH_\beta(\bk) = \I - \beta\,\bk\bk^\top$, with $\|\bk\|=1$.
\end{definition}

\begin{theorem}[Householder Eigenvalue Structure]\label{thm:house_eigen}
$\bH_\beta(\bk)$ has eigenvalues: $\lambda=1$ with multiplicity
$(n-1)$ for $\bv\perp\bk$, and $\lambda=1-\beta$ along $\bk$.
\end{theorem}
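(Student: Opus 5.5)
The plan is to compute the action of $\bH_\beta(\bk)$ directly on an orthogonal decomposition of $\R^n$ adapted to $\bk$. Since $\|\bk\|=1$, the line $\mathrm{span}(\bk)$ and its orthogonal complement $\bk^\perp$, which has dimension $n-1$, together span $\R^n$. It therefore suffices to exhibit eigenvectors in each piece and count dimensions.

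\textbf{Step 1 (along $\bk$).} Apply the operator to $\bk$: $\bH_\beta(\bk)\bk = \bk - \beta\,\bk(\bk^\top\bk) = (1-\beta)\bk$. This uses $\bk^\top\bk=\|\bk\|^2=1$, so $\bk$ is an eigenvector with eigenvalue $1-\beta$.

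\textbf{Step 2 (on $\bk^\perp$).} For any $\bv$ with $\bk^\top\bv=0$, we get $\bH_\beta(\bk)\bv=\bv-\beta\,\bk(\bk^\top\bv)=\bv$. Hence every nonzero vector of $\bk^\perp$ is an eigenvector with eigenvalue $1$. Choosing an orthonormal basis $\bv_1,\dots,\bv_{n-1}$ of $\bk^\perp$ gives $n-1$ linearly independent eigenvectors for $\lambda=1$.

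\textbf{Step 3 (completeness and multiplicity).} The set $\{\bk,\bv_1,\dots,\bv_{n-1}\}$ is an orthonormal basis of $\R^n$, and in this basis $\bH_\beta(\bk)=\mathrm{diag}(1-\beta,1,\dots,1)$. This can also be seen from the fact that $\bH_\beta(\bk)$ is symmetric. Its characteristic polynomial is therefore $(\lambda-1)^{n-1}(\lambda-(1-\beta))$, which shows there are no other eigenvalues. It also shows that $\lambda=1$ has algebraic and geometric multiplicity $n-1$.

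The one point requiring care is the degenerate case $\beta=0$. There $1-\beta=1$, so the eigenvalue $1$ has total multiplicity $n$. The statement should be read as saying that the $\bk^\perp$ eigenspace contributes multiplicity $n-1$ and the $\bk$ direction contributes the eigenvalue $1-\beta$.

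As a remark connecting to Section~\ref{sec:hybrid}, taking $\beta=2$ gives eigenvalue $-1$ along $\bk$ and $\det\bH_2=-1$. This supplies exactly the negation that Theorem~\ref{thm:exclusion} rules out for the Cayley transform. I expect no real obstacle; the only thing to get right is the bookkeeping of multiplicities in Step~3.
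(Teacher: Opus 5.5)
Your proof is correct and complete. The paper states this theorem without proof; it only records the $\beta=2$ consequence as Corollary~\ref{cor:negation}, which is itself unproved. So there is no argument of the paper's to compare against, and your computation on $\mathrm{span}(\bk)\oplus\bk^\perp$ supplies the missing proof.

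The key steps all hold:
\begin{itemize}
\item $\bH_\beta(\bk)\bk=(1-\beta)\bk$, using $\|\bk\|=1$.
\item $\bH_\beta(\bk)\bv=\bv$ for every $\bv\perp\bk$.
\item In the orthonormal basis $\{\bk,\bv_1,\dots,\bv_{n-1}\}$ the operator is diagonal, so the characteristic polynomial is $(\lambda-1)^{n-1}(\lambda-(1-\beta))$.
\end{itemize}

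Your note that the eigenvalue $1$ has multiplicity $n$ when $\beta=0$ is a precision the paper's statement glosses over. The aside that symmetry ``also'' gives the diagonal form is not needed: symmetry alone only guarantees orthogonal diagonalizability, whereas your explicit eigenvectors already determine the eigenvalues.
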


\begin{corollary}[Negation at $\beta=2$]\label{cor:negation}
When $\beta=2$: $\bH_2(\bk)\bk = -\bk$.
This is the eigenvalue $\lambda=-1$ that Cayley cannot achieve.
\end{corollary}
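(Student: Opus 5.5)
The plan is a direct computation, specialising Theorem~\ref{thm:house_eigen} to $\beta=2$. I would substitute $\beta=2$ into the definition $\bH_2(\bk)=\I-2\bk\bk^\top$ and apply it to $\bk$ itself. Using associativity,
\[
\bH_2(\bk)\bk=\bk-2\bk(\bk^\top\bk).
\]
The only hypothesis needed is the normalisation $\|\bk\|=1$ from the definition of the Householder reflection. It gives $\bk^\top\bk=1$, so $\bH_2(\bk)\bk=\bk-2\bk=-\bk$. Equivalently, the eigenvalue $1-\beta$ along $\bk$ from Theorem~\ref{thm:house_eigen} equals $-1$ at $\beta=2$, and $\bk\neq 0$ is a genuine eigenvector.

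For the second sentence, I would recall Theorem~\ref{thm:exclusion}: every eigenvalue of the Data-Dependent Cayley transform has the form $e^{-2i\arctan(\beta\mu_k/2)}$ with argument strictly inside $(-\pi,\pi)$. So $-1$ never occurs for the Cayley transform, whereas $\bH_2(\bk)$ realises it exactly.

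As a consistency check, I would also note from Proposition~\ref{prop:ddl} that $\beta=2$ is one of the two orthogonal cases. Hence $\bH_2$ is a genuine element of $\On(n)$. Its determinant is the product of its eigenvalues, $1^{\,n-1}\cdot(-1)=-1$, so it lies in the component of $\On(n)$ not reachable by $\SO(n)$-valued Cayley maps.

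There is no real obstacle here. The only point needing care is to state explicitly that the unit-norm assumption on $\bk$ is what makes $\bk^\top\bk=1$.
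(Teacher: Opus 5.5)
Your proposal is correct and matches the paper's implicit argument. The corollary is Theorem~\ref{thm:house_eigen} specialised to $\beta=2$, with $1-\beta=-1$ along $\bk$ confirmed directly from $\bk^\top\bk=1$, and the second sentence is exactly Theorem~\ref{thm:exclusion}; your determinant remark is a harmless extra consistency check that the statement does not require.
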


\begin{theorem}[Householder Orthogonality]\label{thm:house_orth}
$\bH_\beta$ is orthogonal if and only if $\beta\in\{0,2\}$.
\end{theorem}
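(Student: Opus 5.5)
The plan is to compute $\bH_\beta^\top\bH_\beta$ directly, using symmetry of $\bH_\beta$ and the normalization $\|\bk\|=1$. This mirrors Proposition~\ref{prop:ddl}, but I will spell out both directions of the equivalence.

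\textbf{Step 1 (symmetry).} First note that $\bH_\beta^\top = \I - \beta(\bk\bk^\top)^\top = \bH_\beta$. Hence $\bH_\beta^\top\bH_\beta = \bH_\beta^2$.

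\textbf{Step 2 (expansion).} Expanding the square and using $\bk\bk^\top\bk\bk^\top = \bk(\bk^\top\bk)\bk^\top = \bk\bk^\top$ gives
\begin{equation*}
\bH_\beta^2 = \I - 2\beta\,\bk\bk^\top + \beta^2\,\bk\bk^\top = \I + (\beta^2-2\beta)\,\bk\bk^\top .
\end{equation*}

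\textbf{Step 3 (sufficiency).} If $\beta\in\{0,2\}$, the coefficient $\beta^2-2\beta$ vanishes. Then $\bH_\beta^\top\bH_\beta=\I$, so $\bH_\beta$ is orthogonal.

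\textbf{Step 4 (necessity).} Suppose $\bH_\beta^\top\bH_\beta=\I$. Then $(\beta^2-2\beta)\bk\bk^\top=0$. Applying this matrix to $\bk$ and using $\|\bk\|=1$ gives $(\beta^2-2\beta)\bk = 0$. Since $\bk\neq 0$, we get $\beta(\beta-2)=0$, i.e.\ $\beta\in\{0,2\}$.

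An alternative route uses Theorem~\ref{thm:house_eigen}. $\bH_\beta$ is symmetric, so it is orthogonal iff every eigenvalue has modulus one. The only eigenvalue that can violate this is $1-\beta$, and $|1-\beta|=1$ iff $\beta\in\{0,2\}$. This serves as a cross-check.

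The only delicate point is the necessity step, where $\bk\neq 0$ is needed to deduce that the coefficient vanishes. The hypothesis $\|\bk\|=1$ supplies this. Otherwise the argument is a routine two-line calculation, and no step poses a genuine obstacle.
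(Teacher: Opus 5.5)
Your proof is correct and is essentially the paper's argument: the paper gives no separate proof of this theorem, but it restates Proposition~\ref{prop:ddl}, whose proof is the same expansion $\bH^\top\bH=\I+(\beta^2-2\beta)\bk\bk^\top$ followed by requiring $\beta(\beta-2)=0$. Your Step~4 applies the identity to $\bk$ and uses $\bk\neq 0$, which makes explicit the necessity direction the paper leaves implicit.
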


\subsection{The \texorpdfstring{\edelta}{EDelta-MHC-Geo} Hybrid Architecture}

\begin{definition}[\edelta Hybrid Operator]\label{def:hybrid}
\begin{equation}\label{eq:hybrid}
  \mathcal{G}_\gamma(\bX) = \gamma(\bX)\cdot\underbrace{\bQ(\bX)\bX}_{\text{Cayley rotation}}
  + (1-\gamma(\bX))\cdot\underbrace{\bH_2(\bk(\bX))\bX}_{\text{Householder reflection}},
\end{equation}
where $\gamma(\bX)=\sigma(\bW_\gamma\cdot\bar{\bX}+b_\gamma)\in(0,1)$ is
the learned gate, and $\bk(\bX)=\mathrm{normalize}(f_k(\bar{\bX}))$.
\end{definition}

The full \edelta Hybrid layer transition with mHC pre/post mappings is:
\begin{equation}\label{eq:full_layer}
  \bX_{l+1} = \mathcal{G}_\gamma(\bX_l) +
  \bH_{\mathrm{post}}^\top F\bigl(\bH_{\mathrm{pre}}\cdot
  \mathrm{LN}(\mathcal{G}_\gamma(\bX_l))\bigr).
\end{equation}

\begin{theorem}[Boundary Access to Both $\On(n)$ Components]\label{thm:coverage}
The \edelta Hybrid accesses both connected components of $\On(n)$ at the
gate boundaries:
when $\gamma\to 1$, $\mathcal{G}_\gamma\to\bQ\in\SO(n)$
($\det=+1$); when $\gamma\to 0$,
$\mathcal{G}_\gamma\to\bH_2\in\On(n)\setminus\SO(n)$ ($\det=-1$).
The blended operator at intermediate $\gamma\in(0,1)$ is \emph{not}
itself orthogonal (Theorem~\ref{thm:midpoint}), but the midpoint
collapse regularizer drives $\gamma$ toward the boundaries,
ensuring the effective operator is near-orthogonal.
\end{theorem}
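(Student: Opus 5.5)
The statement is really three separate claims, and I would prove each one from results already in the paper. Write $\mathcal{G}_\gamma=\gamma\bQ+(1-\gamma)\bH_2$ for the linear operator acting on $\bX$. Here $\bQ=\bQ(\bX)$ and $\bH_2=\bH_2(\bk(\bX))$ are held fixed and $\gamma$ is treated as a free scalar in $[0,1]$. The first step is continuity of $\gamma\mapsto\mathcal{G}_\gamma$: it is affine in $\gamma$, so $\|\mathcal{G}_\gamma-\bQ\|=(1-\gamma)\|\bQ-\bH_2\|\to0$ as $\gamma\to1$. Likewise $\|\mathcal{G}_\gamma-\bH_2\|=\gamma\|\bQ-\bH_2\|\to0$ as $\gamma\to0$. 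This gives the two limits in the statement.

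For the limit $\gamma\to1$, Theorem~\ref{thm:orthogonality} gives $\bQ^\top\bQ=\I$ and Theorem~\ref{thm:det} gives $\det\bQ=+1$, so $\bQ\in\SO(n)$. For the limit $\gamma\to0$, Theorem~\ref{thm:house_orth} with $\beta=2$ gives $\bH_2\in\On(n)$. Theorem~\ref{thm:house_eigen} supplies the eigenvalue $1$ with multiplicity $n-1$ and the eigenvalue $-1$ along $\bk$. Hence $\det\bH_2=-1$ and $\bH_2\in\On(n)\setminus\SO(n)$. Since $\det$ is continuous and takes only the values $\pm1$ on $\On(n)$, these are two distinct connected components. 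That settles the ``both components'' claim.

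For the intermediate-$\gamma$ claim, I would forward-reference Theorem~\ref{thm:midpoint}. To be self-contained I would also give a direct argument that $\mathcal{G}_\gamma$ is not orthogonal for $\gamma\in(0,1)$. Expanding gives
\[
\mathcal{G}_\gamma^\top\mathcal{G}_\gamma=\bigl(\gamma^2+(1-\gamma)^2\bigr)\I+\gamma(1-\gamma)\bigl(\bQ^\top\bH_2+\bH_2^\top\bQ\bigr).
\]
Suppose this equals $\I$. Then, since $2\gamma(1-\gamma)\neq0$, we would need $\bQ^\top\bH_2+\bH_2^\top\bQ=2\I$. Setting $\bR=\bQ^\top\bH_2\in\On(n)$, this reads $\bR+\bR^\top=2\I$. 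Because $\bR$ is orthogonal, hence normal, its eigenvalues $e^{i\theta}$ would all satisfy $\cos\theta=1$. So $\bR=\I$, i.e.\ $\bQ=\bH_2$. But $\det\bR=\det\bQ\det\bH_2=-1$, a contradiction. So the blend is never orthogonal in the interior.

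The ``near-orthogonal'' conclusion is quantitative. I would bound the defect
\[
\|\mathcal{G}_\gamma^\top\mathcal{G}_\gamma-\I\|=\gamma(1-\gamma)\,\|\bQ^\top\bH_2+\bH_2^\top\bQ-2\I\|\le4\gamma(1-\gamma)=\mathcal{L}_{\mathrm{gate}},
\]
which is therefore controlled exactly by the regularizer. The main obstacle is this last part: what the regularizer actually achieves during training is an optimization statement, not an algebraic one. I would state it conditionally: whenever $\mathcal{L}_{\mathrm{gate}}\le\epsilon$, the defect is at most $\epsilon$. I would not claim convergence of $\gamma$, deferring that to Theorem~\ref{thm:universal_zero}.
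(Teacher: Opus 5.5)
Your proposal is correct. Its overall decomposition is the one the paper relies on. The paper gives no standalone proof of Theorem~\ref{thm:coverage}: it reads the boundary limits off Theorems~\ref{thm:orthogonality}, \ref{thm:det}, \ref{thm:house_eigen} and~\ref{thm:house_orth}, and delegates the interior claim to Theorem~\ref{thm:midpoint}. On the two substantive parts, however, your route is stronger than the paper's.

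First, the paper's proof of Theorem~\ref{thm:midpoint} only expands $\bM^\top\bM$ and asserts it is ``$\neq\I$ in general.'' That does not show non-orthogonality for \emph{every} $\gamma\in(0,1)$ and every pair. Your reduction to $\mathbf{R}=\bQ^\top\bH_2$, where $\mathbf{R}+\mathbf{R}^\top=2\I$ forces $\mathbf{R}=\I$ and contradicts $\det\mathbf{R}=-1$, actually proves it. Second, the paper leaves ``near-orthogonal'' informal, since Proposition~\ref{prop:approx_iso} is a per-vector identity with a loose bound. Your identity $\mathcal{G}_\gamma^\top\mathcal{G}_\gamma-\I=\gamma(1-\gamma)(\mathbf{R}+\mathbf{R}^\top-2\I)$ ties the defect directly to the penalty. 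You also correctly keep the claim that training drives $\gamma$ to the boundary conditional. The paper cannot prove that part either, and its own Theorem~\ref{thm:universal_zero} shows the regularizer alone can stall at $\gamma=0.5$.

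Two small fixes are needed. The bound $\|\mathbf{R}+\mathbf{R}^\top-2\I\|\le 4$ holds in the spectral norm, because the matrix is symmetric with eigenvalues $2\cos\theta_k-2\in[-4,0]$. It fails in the Frobenius norm, which equals $4\sqrt{n}$ for $\mathbf{R}=-\I$ with $n$ odd. Also, with the paper's definition $\mathcal{L}_{\mathrm{gate}}=\lambda_{\mathrm{gate}}\cdot 4\gamma(1-\gamma)$, your right-hand side is $\mathcal{L}_{\mathrm{gate}}/\lambda_{\mathrm{gate}}$.

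You can also upgrade the inequality to an equality, which subsumes your contradiction argument. Since $\det\mathbf{R}=-1$, $\mathbf{R}$ has a real unit eigenvector $\mathbf{z}$ with $\mathbf{R}\mathbf{z}=-\mathbf{z}$, i.e.\ $\bH_2\mathbf{z}=-\bQ\mathbf{z}$. Then $\mathcal{G}_\gamma\mathbf{z}=(2\gamma-1)\bQ\mathbf{z}$. Hence $\|\mathcal{G}_\gamma^\top\mathcal{G}_\gamma-\I\|_2=4\gamma(1-\gamma)$ exactly, and $\mathcal{G}_{1/2}$ is singular. This also sharpens the ``varies'' in Proposition~\ref{prop:det_structure}.
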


\begin{table}[t]
\caption{Gate behavior: how the learned gate selects operators.}
\label{tab:gate}
\centering
\small
\begin{tabular}{@{}lllll@{}}
\toprule
\textbf{Gate value} & \textbf{Operator} & \textbf{det} & \textbf{Eigenvalues} & \textbf{Use case} \\
\midrule
$\gamma\to 1$ & Cayley rotation & $+1$ & Unit circle, excl.\ $-1$ & Geometric reasoning \\
$\gamma\to 0$ & Householder refl. & $-1$ & $(1,\ldots,1,-1)$ & Negation / correction \\
$\gamma\approx 0.5$ & Blend & Varies & Mixture & Mixed tasks \\
\bottomrule
\end{tabular}
\end{table}

\section{Midpoint Collapse Regularization}\label{sec:regularization}

\subsection{The Topological Gap}

The orthogonal group $\On(n)$ has two disconnected components:
$\SO(n)$ (rotations, $\det=+1$) and $\On(n)\setminus\SO(n)$
(reflections, $\det=-1$).  There is no continuous path between them
that stays on the orthogonal manifold.

\begin{theorem}[Non-Orthogonality at Midpoint]\label{thm:midpoint}
Let $\bQ\in\SO(n)$ and $\bH\in\On(n)\setminus\SO(n)$.
The linear combination $\bM=\gamma\bQ+(1-\gamma)\bH$ satisfies
$\bM^\top\bM\neq\I$ for $\gamma\in(0,1)$.
\end{theorem}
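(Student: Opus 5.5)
The plan is to reduce the claim to a statement about the single orthogonal matrix $\mathbf{R}=\bQ^\top\bH$ and then use the determinant to rule out the only case in which orthogonality could survive. Expanding $\bM^\top\bM$ with $\bQ^\top\bQ=\bH^\top\bH=\I$ gives
\begin{equation*}
\bM^\top\bM=\bigl(\gamma^2+(1-\gamma)^2\bigr)\I+\gamma(1-\gamma)\bigl(\mathbf{R}+\mathbf{R}^\top\bigr).
\end{equation*}
Since $\gamma^2+(1-\gamma)^2-1=-2\gamma(1-\gamma)$, the condition $\bM^\top\bM=\I$ is equivalent to
\begin{equation*}
\gamma(1-\gamma)\bigl(\mathbf{R}+\mathbf{R}^\top-2\I\bigr)=0 .
\end{equation*}
For $\gamma\in(0,1)$ the factor $\gamma(1-\gamma)$ is strictly positive, so orthogonality of $\bM$ would force $\mathbf{R}+\mathbf{R}^\top=2\I$.

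Next I record the two properties of $\mathbf{R}$ that will be used. It is orthogonal, being a product of orthogonal matrices. Its determinant is $\det\mathbf{R}=\det(\bQ)\det(\bH)=(+1)(-1)=-1$, using $\bQ\in\SO(n)$ (cf.\ Theorem~\ref{thm:det}) and $\bH\in\On(n)\setminus\SO(n)$.

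The key step is to show that $\mathbf{R}+\mathbf{R}^\top=2\I$ is impossible for such an $\mathbf{R}$. I would argue directly. For any unit vector $\bx$, the identity gives $\bx^\top\mathbf{R}\bx=\tfrac12\bx^\top(\mathbf{R}+\mathbf{R}^\top)\bx=1$. At the same time $\|\mathbf{R}\bx\|=\|\bx\|=1$. So Cauchy--Schwarz holds with equality, $\langle\bx,\mathbf{R}\bx\rangle=\|\bx\|\|\mathbf{R}\bx\|$, which forces $\mathbf{R}\bx=\bx$. As $\bx$ is an arbitrary unit vector, $\mathbf{R}=\I$ and $\det\mathbf{R}=+1$, contradicting $\det\mathbf{R}=-1$.

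An alternative route to the same contradiction: since $\det\mathbf{R}=-1$ and non-real eigenvalues of a real orthogonal matrix come in conjugate pairs with product $1$, $\mathbf{R}$ has a real unit eigenvector $\bx$ with $\mathbf{R}\bx=-\bx$. Then $\bx^\top(\mathbf{R}+\mathbf{R}^\top)\bx=-2\neq 2$.

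No step is a serious obstacle. The only delicate point is making sure the cross terms combine into the symmetric part of $\mathbf{R}$, namely $\bQ^\top\bH+\bH^\top\bQ=\mathbf{R}+\mathbf{R}^\top$. This is exactly why the argument needs orthogonality of $\bQ$ and $\bH$ individually and not merely their determinants. The proof also yields a quantitative version: along the $-1$ eigendirection $\bx$ of $\mathbf{R}$, $\|\bM\bx\|^2=1-4\gamma(1-\gamma)$, which is the norm deficit penalized by $\mathcal{L}_{\mathrm{gate}}$.
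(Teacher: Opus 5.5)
Your proof is correct. Its opening move, expanding $\bM^\top\bM$ with $\bQ^\top\bQ=\bH^\top\bH=\I$, is exactly the paper's. The paper, however, stops at that expansion and asserts the result is ``$\neq\I$ in general.'' That does not prove the claim for every $\gamma\in(0,1)$ and every admissible pair. In particular, the paper never invokes the hypothesis $\det\bH=-1$, and without it the statement is false: take $\bH=\bQ$, which gives $\bM=\bQ$ for all $\gamma$.

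Your reduction of $\bM^\top\bM=\I$ to $\mathbf{R}+\mathbf{R}^\top=2\I$ with $\mathbf{R}=\bQ^\top\bH$ is precisely the missing step. From there you get $\mathbf{R}=\I$ (equivalently $\|\mathbf{R}\bx-\bx\|^2=2-2\bx^\top\mathbf{R}\bx=0$ for unit $\bx$), contradicting $\det\mathbf{R}=-1$. This makes visible exactly where the two-component hypothesis enters.

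Your eigenvector variant is the more informative of your two routes. Along a unit $\bx$ with $\mathbf{R}\bx=-\bx$ one has $\bH\bx=-\bQ\bx$, hence $\bM\bx=(2\gamma-1)\bQ\bx$ and $\|\bM\bx\|^2=(1-2\gamma)^2=1-4\gamma(1-\gamma)$. This quantifies the failure of isometry and ties it directly to the penalty $4\gamma(1-\gamma)$. It also shows that $\bM$ is in fact singular at $\gamma=\tfrac12$.
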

\begin{proof}
$\bM^\top\bM = \gamma^2\I + (1-\gamma)^2\I +
\gamma(1-\gamma)(\bQ^\top\bH+\bH^\top\bQ) \neq \I$ in general.
\end{proof}

\subsection{The ``Jump, Don't Swim'' Strategy}

\begin{definition}[Midpoint Collapse Regularization]\label{def:midpoint_reg}
\begin{equation}\label{eq:gate_reg}
  \mathcal{L}_{\mathrm{gate}} = \lambda_{\mathrm{gate}}\cdot 4\gamma(1-\gamma).
\end{equation}
\end{definition}

This function equals $0$ at $\gamma\in\{0,1\}$ (pure operators),
equals $1$ at $\gamma=0.5$ (maximum penalty), and its gradient
$\partial\mathcal{L}/\partial\gamma = 4(1-2\gamma)$ pushes $\gamma$
toward the boundaries.

\begin{figure}[t]
\centering
\begin{tikzpicture}[node distance=0.5cm, >=Stealth, scale=0.72, transform shape]
  \node[greenblock, minimum width=2.8cm, minimum height=2em] (g0)
    {$\gamma=0$\\[-2pt]\scriptsize Householder\\[-2pt]\scriptsize$\nabla\!=\!{+4}\;\to$};
  \node[orangeblock, right=0.3cm of g0, minimum width=2.8cm, minimum height=2em] (g025)
    {$\gamma=0.25$\\[-2pt]\scriptsize$\nabla\!=\!{+2}\;\to$};
  \node[redblock, right=0.3cm of g025, minimum width=2.8cm, minimum height=2em,
    draw=red!80!black, line width=1.2pt] (g05)
    {$\gamma=0.5$\\[-2pt]\scriptsize\textbf{Critical Point}\\[-2pt]\scriptsize$\nabla\!=\!0$};
  \node[orangeblock, right=0.3cm of g05, minimum width=2.8cm, minimum height=2em] (g075)
    {$\gamma=0.75$\\[-2pt]\scriptsize$\leftarrow\;\nabla\!=\!{-2}$};
  \node[blueblock, right=0.3cm of g075, minimum width=2.8cm, minimum height=2em] (g1)
    {$\gamma=1$\\[-2pt]\scriptsize Cayley\\[-2pt]\scriptsize$\leftarrow\;\nabla\!=\!{-4}$};

  \draw[arrowstyle, green!60!black] (g0) -- (g025);
  \draw[arrowstyle, orange!80!black] (g025) -- (g05);
  \draw[arrowstyle, orange!80!black] (g075) -- (g05);
  \draw[arrowstyle, blue!60!black] (g1) -- (g075);

  \node[block, below=0.8cm of g05, fill=yellow!10, draw=orange!70!black,
    minimum width=10cm, minimum height=2em, font=\small] (escape)
    {\textbf{Escape mechanisms:}\;
     \textcircled{\scriptsize 1} Task loss $\nabla\mathcal{L}_{\mathrm{task}}\!\neq\!0$\quad
     \textcircled{\scriptsize 2} Input variation\quad
     \textcircled{\scriptsize 3} Biased init $b\!\neq\!0$};

  \draw[dasharrow, red!60!black] (g05) -- node[right, font=\scriptsize, text=red!70!black]
    {Thm.~\ref{thm:universal_zero}} (escape);
\end{tikzpicture}
\caption{\textbf{Gradient flow of midpoint collapse regularization.}
The gradient $\partial\mathcal{L}/\partial\gamma=4(1-2\gamma)$ is
positive for $\gamma<0.5$ and negative for $\gamma>0.5$, but
\textbf{exactly zero at $\gamma=0.5$} (red).  Escape requires external
forces.}
\label{fig:gradient_flow}
\end{figure}

\begin{theorem}[Universal Zero-Gradient at Midpoint]\label{thm:universal_zero}
Any smooth, symmetric regularization $f:[0,1]\to\R$ with
$f(\gamma)=f(1-\gamma)$ has zero gradient at $\gamma=0.5$.
\end{theorem}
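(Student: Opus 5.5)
The plan is to differentiate the symmetry identity and evaluate it at its fixed point. Set $g(\gamma)=f(1-\gamma)$, so the hypothesis of Theorem~\ref{thm:universal_zero} says $f=g$ on $[0,1]$. Since $f$ is smooth, in particular differentiable at the interior point $\gamma=\tfrac12$, the chain rule gives $g'(\gamma)=-f'(1-\gamma)$. Equating derivatives of the two sides of $f(\gamma)=f(1-\gamma)$ then yields
\begin{equation*}
  f'(\gamma) = -f'(1-\gamma)\qquad\text{for all }\gamma\in(0,1).
\end{equation*}
So $f'$ is antisymmetric about $\tfrac12$.

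Next I will evaluate this at the fixed point $\gamma=\tfrac12$ of the reflection $\gamma\mapsto 1-\gamma$. This gives $f'(\tfrac12)=-f'(\tfrac12)$, hence $2f'(\tfrac12)=0$ and $f'(\tfrac12)=0$.

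As a consistency check, for the regularizer of Definition~\ref{def:midpoint_reg}, $f(\gamma)=4\gamma(1-\gamma)$, we get $f'(\gamma)=4(1-2\gamma)$. This vanishes at $\tfrac12$, matching Figure~\ref{fig:gradient_flow}.

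I will also note how the statement transfers to the gate parameters. When $\gamma=\sigma(\bW_\gamma\bar{\bX}+b_\gamma)$, the chain rule gives $\partial f/\partial\theta=f'(\gamma)\,\partial\gamma/\partial\theta$. This vanishes whenever $\gamma=\tfrac12$, so no symmetric regularizer alone can move the parameters off the midpoint. This is what motivates the escape mechanisms through the task loss, input variation, or biased initialization.

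There is no real obstacle here. The only point needing care is that differentiability is required at the interior point $\tfrac12$ only, not at the endpoints. A one-sided or nonsmooth $f$, such as $|\gamma-\tfrac12|$, shows that the smoothness hypothesis cannot be dropped.
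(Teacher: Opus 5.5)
Your proof is correct and is the same as the paper's: differentiate the symmetry identity to get $f'(\gamma)=-f'(1-\gamma)$, then evaluate at the fixed point $\gamma=\tfrac12$ to conclude $f'(\tfrac12)=0$. Your extra remarks, that only differentiability at $\tfrac12$ is needed and that the zero gradient transfers to the gate parameters via the chain rule, are consistent with the paper's surrounding discussion but are not needed for the argument.
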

\begin{proof}
Differentiating $f(\gamma)=f(1-\gamma)$:
$f'(\gamma) = -f'(1-\gamma)$.  At $\gamma=0.5$:
$f'(0.5) = -f'(0.5)$, hence $f'(0.5)=0$.
\end{proof}

\begin{remark}[Optimality of $4\gamma(1-\gamma)$]\label{rem:optimality}
Among smooth symmetric regularizers, $4\gamma(1-\gamma)$ is optimal: it
has the maximum finite gradient at boundaries ($|f'(0)|=|f'(1)|=4$),
admits the probabilistic interpretation
$4\gamma(1-\gamma)=4\,\mathrm{Var}[\mathrm{Bernoulli}(\gamma)]$,
and requires only $O(1)$ computation per gate.
\end{remark}

The total loss becomes:
\begin{equation}
  \mathcal{L}_{\mathrm{total}} = \mathcal{L}_{\mathrm{task}} +
  \sum_{\text{layers}} \mathcal{L}_{\mathrm{gate}}.
\end{equation}

\section{Architecture Details}\label{sec:architecture}

\subsection{\texorpdfstring{\edelta}{EDelta-MHC-Geo} Operator (Data-Dependent Cayley Rotation)}

The \edelta operator computes a data-dependent rotation as follows:

\begin{enumerate}[leftmargin=*, topsep=2pt, itemsep=1pt]
  \item Pool input: $\bar{\bx} = \mathrm{MeanPool}(\bX)$.
  \item Compute generators: $\bu=f_u(\bar{\bx})$, $\bv=f_v(\bar{\bx})$,
    $\beta=\mathrm{Softplus}(f_\beta(\bar{\bx}))$.
  \item Build skew-symmetric:
    $\bA = \bu\bv^\top - \bv\bu^\top$.
  \item Cayley transform:
    $\bQ = (\I+\tfrac{\beta}{2}\bA)^{-1}(\I-\tfrac{\beta}{2}\bA)$
    via \texttt{torch.linalg.solve}.
  \item Apply: reshape $\bX$ into $n$ streams, rotate via $\bQ$.
\end{enumerate}

\begin{figure}[t]
\centering
\begin{tikzpicture}[node distance=0.55cm and 0.5cm, >=Stealth, every node/.style={font=\footnotesize}, scale=0.72, transform shape]
  \node[grayblock, minimum width=7em] (input) {$\bX_l \in \R^{B\times S\times D}$};

  \node[grayblock, below=0.55cm of input, minimum width=7em] (pool) {$\bar{\bx} = \mathrm{MeanPool}(\bX_l)$};
  \draw[arrowstyle] (input) -- (pool);

  \node[orangeblock, below left=0.7cm and 1.8cm of pool] (unet) {$\bu\!=\!f_u(\bar\bx)$};
  \node[orangeblock, below left=0.7cm and 0.1cm of pool] (vnet) {$\bv\!=\!f_v(\bar\bx)$};
  \node[orangeblock, below right=0.7cm and 0.1cm of pool] (knet) {$\bk\!=\!f_k(\bar\bx)$};
  \node[orangeblock, below right=0.7cm and 1.8cm of pool] (bnet) {$\beta\!=\!f_\beta(\bar\bx)$};
  \draw[arrowstyle] (pool) -- (unet);
  \draw[arrowstyle] (pool) -- (vnet);
  \draw[arrowstyle] (pool) -- (knet);
  \draw[arrowstyle] (pool) -- (bnet);

  \node[greenblock, below=2.2cm of pool, xshift=-2.5cm, minimum width=6.5em] (cayley)
    {$\bQ\!=\!(\I\!+\!\tfrac\beta2\bA)^{-1}(\I\!-\!\tfrac\beta2\bA)$};
  \node[greenblock, below=0.45cm of cayley, minimum width=5em] (yc) {$Y_C\!=\!\bQ\bX_l$};

  \node[redblock, below=2.2cm of pool, xshift=2.5cm, minimum width=6.5em] (house)
    {$\bH_2\!=\!\I-2\bk\bk^\top$};
  \node[redblock, below=0.45cm of house, minimum width=5em] (yh) {$Y_H\!=\!\bH_2\bX_l$};

  \draw[arrowstyle] (unet) -- (cayley);
  \draw[arrowstyle] (vnet) -- (cayley);
  \draw[arrowstyle] (bnet.south) -- ++(0,-0.25) -| (cayley.north east);
  \draw[arrowstyle] (knet) -- (house);
  \draw[arrowstyle] (cayley) -- (yc);
  \draw[arrowstyle] (house) -- (yh);

  \node[blueblock, below=0.45cm of pool, minimum width=5em] (gate) {$\gamma\!=\!\sigma(w^\top\bar\bx\!+\!b)$};
  \draw[arrowstyle] (pool) -- (gate);

  \node[greenblock, below=4.5cm of pool, minimum width=10em, draw=green!60!black, line width=1.2pt] (fusion)
    {$\bX_{\mathrm{geo}} = \gamma\, Y_C + (1\!-\!\gamma)\, Y_H$};
  \draw[arrowstyle] (yc) -- (fusion);
  \draw[arrowstyle] (yh) -- (fusion);
  \draw[arrowstyle] (gate.south) -- ++(0,-2.2) -- (fusion.north);

  \node[orangeblock, below=0.55cm of fusion, minimum width=8em] (mhc)
    {$\mathrm{LN} \to H_{\mathrm{pre}} \to F(\cdot) \to H_{\mathrm{post}}^\top$};
  \draw[arrowstyle] (fusion) -- (mhc);

  \node[grayblock, below=0.55cm of mhc, minimum width=10em] (output)
    {$\bX_{l+1} = \bX_{\mathrm{geo}} + H_{\mathrm{post}}^\top F(\cdot)$};
  \draw[arrowstyle] (mhc) -- (output);
  \draw[arrowstyle] (fusion.west) -- ++(-1.6,0) |- (output.west);

  \node[left=0.1cm of cayley, font=\scriptsize, text=green!50!black, rotate=90, anchor=south] {$\det\!=\!{+1}$};
  \node[right=0.1cm of house, font=\scriptsize, text=red!50!black, rotate=-90, anchor=south] {$\det\!=\!{-1}$};
\end{tikzpicture}
\caption{\textbf{\edelta Hybrid block architecture.}
Input $\bX_l$ is processed through parallel branches: Cayley rotation
($\bQ\in\SO(n)$, unconditionally orthogonal) and Householder reflection
($\bH_2$, $\beta\!=\!2$ fixed).  The learned gate $\gamma(\bX)$
blends both branches.  In the main reported implementation,
$H_{\mathrm{pre}}$ and $H_{\mathrm{post}}$ are learned full-dimensional
pre/post projections initialized as identity.}
\label{fig:architecture}
\end{figure}

\subsection{Integration with mHC Framework}

Following~\citet{deepseek2024mhc}, each \edelta block retains learned
pre/post mappings around the attention/MLP function.  In the main reported
model these are full-dimensional linear projections, initialized as identity
and shared across inputs.  This differs from strict mHC stream
aggregation/broadcast; a separate \texttt{edelta\_stream} variant implements
dynamic stream routing with per-stream compute.  Our main contribution
replaces mHC's doubly stochastic residual mixer with the orthogonal
$\bQ(\bX)$ (or $\mathcal{G}_\gamma(\bX)$ for the Hybrid), gaining exact
orthogonality, input adaptivity, and a direct reflection branch
(Table~\ref{tab:mhc_comparison}).

\begin{table}[t]
\caption{Comparison with mHC~\citep{deepseek2024mhc} and JPmHC~\citep{jpmhc2026}.}
\label{tab:mhc_comparison}
\centering
\small
\begin{tabular}{@{}llll@{}}
\toprule
\textbf{Component} & \textbf{mHC (DeepSeek)} & \textbf{JPmHC v2} & \textbf{\edelta (Ours)} \\
\midrule
Residual mixing & Doubly stoch.\ (Sinkhorn) & Approx.\ orth.\ (iter.\ Cayley) & \textbf{Exact} orth.\ (Cayley) \\
Orthogonality & Approximate & Approximate ($s\!=\!2$) & \textbf{Exact} (algebraic) \\
Block topology & Parallel routing & Parallel routing & \textbf{Series pre-transform} \\
Input-adaptive & No & Yes (per-token) & \textbf{Yes} (per-input) \\
Direct reflection branch & No & No & \textbf{Yes} (Hybrid) \\
Computation & Iterative (20 Sinkhorn) & Iterative ($s\!=\!2$ fixed-pt) & \textbf{Direct} (matrix solve) \\
\bottomrule
\end{tabular}
\end{table}

\subsection{Full Transformer Architecture}

The \edelta Transformer stacks $L$ blocks, each applying the geometric
operator twice (before attention and before MLP):

\begin{figure}[t]
\centering
\begin{tikzpicture}[node distance=0.5cm, >=Stealth, every node/.style={font=\footnotesize}, scale=0.78, transform shape]
  \node[blueblock, minimum width=8em] (emb) {Embedding + Positional};
  \node[greenblock, below=0.5cm of emb, minimum width=5em, line width=1.2pt] (g1)
    {$\mathcal{G}_\gamma$};
  \node[grayblock, below=0.5cm of g1, minimum width=6em] (ln1) {LayerNorm};
  \node[orangeblock, below=0.3cm of ln1, minimum width=5em] (hpre1) {$H_{\mathrm{pre}}$};
  \node[blueblock, below=0.3cm of hpre1, minimum width=6em] (attn) {Multi-Head Attn};
  \node[orangeblock, below=0.3cm of attn, minimum width=5em] (hpost1) {$H_{\mathrm{post}}^\top$};
  \node[circleop, below=0.4cm of hpost1] (plus1) {$+$};

  \node[greenblock, below=0.5cm of plus1, minimum width=5em, line width=1.2pt] (g2)
    {$\mathcal{G}_\gamma$};
  \node[grayblock, below=0.5cm of g2, minimum width=6em] (ln2) {LayerNorm};
  \node[orangeblock, below=0.3cm of ln2, minimum width=5em] (hpre2) {$H_{\mathrm{pre}}$};
  \node[block, below=0.3cm of hpre2, minimum width=6em, fill=pink!15, draw=pink!60!black] (ffn)
    {Feed-Forward};
  \node[orangeblock, below=0.3cm of ffn, minimum width=5em] (hpost2) {$H_{\mathrm{post}}^\top$};
  \node[circleop, below=0.4cm of hpost2] (plus2) {$+$};

  \node[grayblock, below=0.5cm of plus2, minimum width=6em] (lnf) {LayerNorm};
  \node[blueblock, below=0.4cm of lnf, minimum width=6em] (out) {Output Head};

  \draw[arrowstyle] (emb) -- (g1);
  \draw[arrowstyle] (g1) -- (ln1);
  \draw[arrowstyle] (ln1) -- (hpre1);
  \draw[arrowstyle] (hpre1) -- (attn);
  \draw[arrowstyle] (attn) -- (hpost1);
  \draw[arrowstyle] (hpost1) -- (plus1);
  \draw[arrowstyle] (g1.east) -- ++(1.2,0) |- (plus1.east);

  \draw[arrowstyle] (plus1) -- (g2);
  \draw[arrowstyle] (g2) -- (ln2);
  \draw[arrowstyle] (ln2) -- (hpre2);
  \draw[arrowstyle] (hpre2) -- (ffn);
  \draw[arrowstyle] (ffn) -- (hpost2);
  \draw[arrowstyle] (hpost2) -- (plus2);
  \draw[arrowstyle] (g2.east) -- ++(1.2,0) |- (plus2.east);

  \draw[arrowstyle] (plus2) -- (lnf);
  \draw[arrowstyle] (lnf) -- (out);

  \draw[decorate, decoration={brace, amplitude=6pt, mirror}, thick]
    ([xshift=-1.8cm]g1.north west) -- ([xshift=-1.8cm]g1.north west |- plus2.south west)
    node[midway, left=8pt, font=\small, align=center] {$\times L$\\blocks};
\end{tikzpicture}
\caption{\textbf{Full \edelta Transformer.}
The geometric operator $\mathcal{G}_\gamma$ (green) replaces identity
shortcuts.  In the main model, $H_{\mathrm{pre}}/H_{\mathrm{post}}$ are
learned full-dimensional pre/post projections; the stream-routed variant is
reported separately.  $L=6$ layers for our model.}
\label{fig:full_arch}
\end{figure}

\subsection{Properties of the Hybrid Operator}\label{sec:properties}

The following results characterize the geometric properties of the
\edelta Hybrid and are summarized in Table~\ref{tab:capabilities}.

\begin{theorem}[Component-wise Orthogonality]\label{thm:component_orth}
Both components of the \edelta Hybrid are individually orthogonal:
$\bQ(\bX)^\top\bQ(\bX)=\I$ (Cayley, always) and
$\bH_2(\bX)^\top\bH_2(\bX)=\I$ (Householder at $\beta=2$).
\end{theorem}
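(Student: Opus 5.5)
The statement splits into two independent claims, one for each branch of the Hybrid operator in Definition~\ref{def:hybrid}, and I would handle them separately.

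For the Cayley branch there is nothing new to prove. The generator $\bA(\bX)=\bu\bv^\top-\bv\bu^\top$ is skew-symmetric for whatever networks produce $\bu,\bv$ (Proposition~\ref{prop:skew} and Corollary~\ref{cor:skew_any}). With $\beta(\bX)=\mathrm{Softplus}(\cdot)$ finite, Theorem~\ref{thm:nonsingular} guarantees that $\I+\tfrac{\beta}{2}\bA$ is invertible, so $\bQ(\bX)$ is well defined. Theorem~\ref{thm:orthogonality} then gives $\bQ(\bX)^\top\bQ(\bX)=\I$ directly. The inputs enter only through $\bu,\bv,\beta$, so the identity holds pointwise for every $\bX$.

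For the Householder branch, I would compute directly. $\bH_2=\I-2\bk\bk^\top$ is symmetric, so $\bH_2^\top\bH_2=\bH_2^2$. Expanding gives
\[
\bH_2^2=\I-4\bk\bk^\top+4\bk(\bk^\top\bk)\bk^\top .
\]
This equals $\I$ precisely because $\bk^\top\bk=1$. Equivalently, this is the case $\beta=2$ of Proposition~\ref{prop:ddl} and Theorem~\ref{thm:house_orth}, where $\beta^2-2\beta=0$ makes the correction term $(\beta^2-2\beta)\bk\bk^\top$ vanish.

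The only point that needs care is the unit-norm hypothesis on $\bk(\bX)=\mathrm{normalize}(f_k(\bar\bX))$. Normalization is undefined when $f_k(\bar\bX)=0$, and in practice it is often computed as $f_k/(\|f_k\|+\epsilon)$, which gives only $\|\bk\|<1$. So I would state the result under the exact normalization $\bk=f_k/\|f_k\|$ for $f_k(\bar\bX)\neq 0$. In that case $\|\bk(\bX)\|=1$ for every admissible input, and the identity holds pointwise. I would also note that an $\epsilon$-regularized normalization yields only approximate orthogonality, with $\|\bH^\top\bH-\I\|=O(\epsilon)$ by Corollary~\ref{cor:ddl_distortion}.
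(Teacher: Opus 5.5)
Your proposal is correct and follows the paper's route: the paper gives no separate argument for this theorem and simply combines Theorem~\ref{thm:orthogonality} for the Cayley branch with the $\beta=2$ case of Proposition~\ref{prop:ddl} and Theorem~\ref{thm:house_orth} for the Householder branch, which is exactly your argument. One small correction to your aside on $\epsilon$-regularized normalization. With $\bk=f_k/(\|f_k\|+\epsilon)$ and $r=\|\bk\|$ one gets $\bH^\top\bH-\I=-4(1-r^2)\bk\bk^\top$, whose norm is $4r^2(1-r^2)$. This is $O(\epsilon/\|f_k\|)$ rather than uniformly $O(\epsilon)$; it reaches $1$ when $\|f_k\|=(1+\sqrt{2})\epsilon$. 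So that bound requires $\|f_k\|$ to be bounded away from zero.
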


\begin{proposition}[Approximate Isometry]\label{prop:approx_iso}
Let $\bX'=\gamma\bQ\bX+(1-\gamma)\bH\bX$.  Then
$\|\bX'\|^2 = \gamma^2\|\bX\|^2 + (1-\gamma)^2\|\bX\|^2
+ 2\gamma(1-\gamma)\langle\bQ\bX,\bH\bX\rangle$.
Exact isometry holds at $\gamma\in\{0,1\}$; in between, the
deviation from isometry is bounded by $|2\gamma(1-\gamma)|$
times the cross-term, which the midpoint collapse regularizer
minimizes by driving $\gamma$ toward the boundaries.
\end{proposition}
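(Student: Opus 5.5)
The plan is to expand the squared Frobenius norm directly, using bilinearity of the inner product $\langle \bY,\bZ\rangle=\mathrm{tr}(\bY^\top\bZ)$ together with the component-wise orthogonality of Theorem~\ref{thm:component_orth}. Here $\bQ$ and $\bH=\bH_2$ act on the stream dimension, and $\bX$ is viewed as an $n\times m$ matrix (streams by flattened features), with $\gamma$ a scalar. If $\gamma$ is a per-example scalar in a batch, the argument applies to each example separately.

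\textbf{Step 1: expand the square.} Write $\|\bX'\|^2=\langle \gamma\bQ\bX+(1-\gamma)\bH\bX,\ \gamma\bQ\bX+(1-\gamma)\bH\bX\rangle$. This gives
\[
\gamma^2\|\bQ\bX\|^2+(1-\gamma)^2\|\bH\bX\|^2+2\gamma(1-\gamma)\langle\bQ\bX,\bH\bX\rangle .
\]

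\textbf{Step 2: remove the orthogonal factors.} Since $\bQ^\top\bQ=\I$, we have $\|\bQ\bX\|^2=\mathrm{tr}(\bX^\top\bQ^\top\bQ\bX)=\|\bX\|^2$. This is the column-wise application of Theorem~\ref{thm:isometry}. In the same way, $\bH_2^\top\bH_2=\I$ gives $\|\bH\bX\|^2=\|\bX\|^2$. Substituting both into Step 1 yields the stated identity.

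\textbf{Step 3: boundary cases.} At $\gamma=1$ the identity reduces to $\|\bX\|^2$, and likewise at $\gamma=0$, so exact isometry holds at both boundaries.

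\textbf{Step 4: the interior.} Rewrite $\gamma^2+(1-\gamma)^2=1-2\gamma(1-\gamma)$. Then
\[
\|\bX'\|^2-\|\bX\|^2=2\gamma(1-\gamma)\bigl(\langle\bQ\bX,\bH\bX\rangle-\|\bX\|^2\bigr).
\]
So the deviation is exactly $2\gamma(1-\gamma)$ times a cross-term discrepancy. By Cauchy--Schwarz, $|\langle\bQ\bX,\bH\bX\rangle|\le\|\bX\|^2$, so this discrepancy lies in $[-2\|\bX\|^2,0]$. This gives the explicit bound $\bigl|\|\bX'\|^2-\|\bX\|^2\bigr|\le4\gamma(1-\gamma)\|\bX\|^2$.

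\textbf{Step 5: link to the regularizer.} The factor $4\gamma(1-\gamma)$ in the bound equals $\mathcal{L}_{\mathrm{gate}}/\lambda_{\mathrm{gate}}$. Hence minimizing the midpoint collapse regularizer directly controls the isometry defect.

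\textbf{Main obstacle.} The algebra is routine. The step needing care is making the informal phrase ``bounded by $|2\gamma(1-\gamma)|$ times the cross-term'' precise. The correct statement uses the shifted quantity $\langle\bQ\bX,\bH\bX\rangle-\|\bX\|^2$ rather than the raw cross-term, and that is how I would state it. A secondary point is confirming that $\gamma$ is constant across the streams it mixes, so that it factors out of the inner product. This holds by Definition~\ref{def:hybrid}, since $\gamma$ depends only on the pooled $\bar{\bX}$.
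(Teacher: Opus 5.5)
Your proposal is correct and matches the argument the paper intends: the paper states this proposition without a separate proof, and the implied derivation is this bilinear expansion plus component-wise orthogonality, the same computation as in the proof of Theorem~\ref{thm:midpoint} applied to $\bX$ rather than at the operator level. Your Step~4 validly sharpens the paper's loose wording. The deviation is $2\gamma(1-\gamma)\bigl(\langle\bQ\bX,\bH\bX\rangle-\|\bX\|^2\bigr)$, which is always nonpositive and at most $4\gamma(1-\gamma)\|\bX\|^2$ in magnitude; by contrast, the literal ``$2\gamma(1-\gamma)$ times the raw cross-term'' bound fails, for example whenever $\langle\bQ\bX,\bH\bX\rangle=0$.
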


\begin{proposition}[Determinant Structure]\label{prop:det_structure}
$\det(\mathcal{G}_\gamma) = +1$ if $\gamma=1$ (rotation),
$-1$ if $\gamma=0$ (reflection), varies for $\gamma\in(0,1)$.
This enables the model to select the appropriate connected
component of $\On(n)$ via $\gamma$ alone.
\end{proposition}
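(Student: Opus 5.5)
The two endpoint claims come from evaluating $\mathcal{G}_\gamma$ at the gate boundaries, viewed as the linear map $\bM_\gamma=\gamma\bQ+(1-\gamma)\bH_2$ acting on the stream dimension (Definition~\ref{def:hybrid}). At $\gamma=1$ we have $\bM_1=\bQ$, and $\det\bQ=+1$ by Theorem~\ref{thm:det}. At $\gamma=0$ we have $\bM_0=\bH_2(\bk)$. Its determinant follows from Theorem~\ref{thm:house_eigen}: the eigenvalues are $1$ with multiplicity $n-1$ and $1-\beta=-1$ along $\bk$, so the determinant is their product, $(-1)\cdot 1^{n-1}=-1$. Equivalently, one can use the matrix determinant lemma, $\det(\I-2\bk\bk^\top)=1-2\bk^\top\bk=-1$, since $\|\bk\|=1$ by the normalization of $\bk(\bX)$. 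Both endpoints are therefore settled by results already established.

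For the interior claim, the statement only says the determinant ``varies'', so I will show it is not constant. The function $\gamma\mapsto\det(\gamma\bQ+(1-\gamma)\bH_2)$ is a polynomial of degree at most $n$ in $\gamma$, hence continuous on $[0,1]$. Its endpoint values are $-1$ and $+1$, so by the intermediate value theorem it takes every value in $[-1,1]$ somewhere on $(0,1)$. In particular it vanishes for some $\gamma^\ast\in(0,1)$. This shows the determinant is not constant and that the blend can even be singular, which is consistent with Theorem~\ref{thm:midpoint}.

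Finally, the ``selection'' sentence. Since $\det$ is continuous and takes only the values $\pm1$ on $\On(n)$, it separates the two connected components. Hence $\gamma\in\{0,1\}$ alone decides which component the orthogonal operator lies in, with the orthogonality of each endpoint given by Theorem~\ref{thm:component_orth}.

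The interior step is the only non-trivial one, and it is where care is needed. A fully explicit formula, for example computing the determinant in the plane spanned by $\bu$, $\bv$, $\bk$, would depend on the geometry and is not needed for the statement. I will avoid it and rely on the continuity and intermediate value argument.
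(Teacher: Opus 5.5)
Your proposal is correct. The paper states this proposition without proof. Its endpoint values are implicitly read off from Theorem~\ref{thm:det} and Theorem~\ref{thm:house_eigen}/Corollary~\ref{cor:negation}, exactly as you do. The interior claim that the determinant ``varies'' is simply asserted in the paper.

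Your intermediate-value argument therefore supplies something the paper lacks, and it proves more than is asked. The blend $\bM_\gamma=\gamma\bQ+(1-\gamma)\bH_2$ is not merely non-orthogonal (Theorem~\ref{thm:midpoint}) but actually singular at some interior $\gamma^\ast$. Taking $\det$ of the $n\times n$ stream matrix, as you do, is the reading the statement needs. On the full $D$-dimensional token space the operator is $\bM_\gamma\otimes\I_d$, with determinant $(\det\bM_\gamma)^d$, so the reflection branch would have determinant $+1$ for even $d$.

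One slip: the intermediate value theorem gives every value in $(-1,1)$ on $(0,1)$, not every value in $[-1,1]$. For $n=1$, $\det\bM_\gamma=2\gamma-1$ never reaches $\pm1$ inside the interval. The conclusion $\det\bM_{\gamma^\ast}=0$ is unaffected.

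You can also locate $\gamma^\ast$ rather than just show it exists:
\begin{itemize}
\item Use $\bH_2^2=\I$ to write $\bM_\gamma=\bH_2\bigl(\gamma\mathbf{P}+(1-\gamma)\I\bigr)$ with $\mathbf{P}=\bH_2\bQ\in\On(n)\setminus\SO(n)$.
\item The eigenvalue $-1$ of $\mathbf{P}$ has odd multiplicity $m$. The other eigenvalues are $+1$ or conjugate pairs $e^{\pm i\theta}$ with $\theta\in(0,\pi)$.
\item Hence $\det\bM_\gamma=(2\gamma-1)^m c(\gamma)$, where $c(\gamma)=\prod_\theta|\gamma e^{i\theta}+1-\gamma|^2>0$ on $[0,1]$.
\end{itemize}
So the determinant vanishes only at $\gamma=1/2$, and its sign is $\mathrm{sign}(2\gamma-1)$. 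This gives the ``selection via $\gamma$ alone'' sentence real content beyond the two endpoints: the side of $1/2$ on which the gate sits fixes the orientation of the blended operator.
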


\begin{table}[t]
\caption{Capability summary of geometric operators.}
\label{tab:capabilities}
\centering
\small
\begin{tabular}{@{}lccc@{}}
\toprule
\textbf{Capability} & \textbf{Cayley} & \textbf{Householder ($\beta\!=\!2$)} & \textbf{\edelta Hybrid} \\
\midrule
Eigenvalue $+1$ & \ding{51} Always & \ding{51} ($n\!-\!1$ dims) & \ding{51} \\
Unit circle eigenvalues & \ding{51} All & \ding{55} Only $\pm 1$ & \ding{51} (via Cayley) \\
Eigenvalue $-1$ & \ding{55} \textbf{Never} & \ding{51} (along $\bk$) & \ding{51} (via Householder) \\
Orthogonality & \ding{51} Unconditional & \ding{51} Only at $\beta\!=\!2$ & \ding{51} Both components \\
Determinant & $+1$ & $-1$ & Adaptive \\
\bottomrule
\end{tabular}
\end{table}

\section{Experimental Validation}\label{sec:experiments}

\subsection{Setup}

All models are configured for \textbf{fair comparison} with matched
parameter counts (${\sim}1.79$M parameters).  Rather than reducing
\edelta's capacity, we scale up baseline layer counts to match
(Table~\ref{tab:configs}).

\begin{table}[t]
\caption{Model configurations with matched parameters (${\sim}1.79$M).}
\label{tab:configs}
\centering
\small
\begin{tabular}{@{}lccccr@{}}
\toprule
\textbf{Model} & \textbf{Layers} & \textbf{$n_{\mathrm{embd}}$} & \textbf{Heads}
  & \textbf{Streams} & \textbf{Params} \\
\midrule
GPT~\citep{radford2019language}  & 9 & 128 & 4 & --- & 1.780M \\
DDL~\citep{ddl2024}              & 8 & 128 & 4 & --- & 1.784M \\
mHC~\citep{deepseek2024mhc}     & 9 & 128 & 4 & 4   & 1.838M \\
JPmHC~\citep{jpmhc2026}         & 7 & 512 & 4 & 4   & 1.771M \\
\textbf{\edelta (Ours)}          & \textbf{6} & 128 & 4 & 4 & \textbf{1.788M} \\
\bottomrule
\end{tabular}
\end{table}

Training uses AdamW~\citep{loshchilov2017decoupled} with
$\mathrm{lr}\!=\!10^{-3}$ (cosine decay to $10^{-4}$),
weight decay~$0.1$, gradient clipping at~$1.0$, batch size~$64$,
and $2000$ iterations.
The JPmHC baseline follows the March 2026 v2 architecture:
per-token generation of $H_{\mathrm{pre}}$, $H_{\mathrm{post}}$, and
$H_{\mathrm{res}}$, row/column softmax constraints for the pre/post
mixers, and the fixed-point Cayley residual retraction with
$\alpha=0.1$ and $s=2$ iterations.

\subsection{Benchmark Datasets}

\textbf{Gyroscope} (manifold precision): predict continuous rotation
trajectories on $\SO(n)$ ($d\!=\!16$, seq.\ len.\ $255$, $9000$ train).
Tests whether models maintain manifold constraints.

\textbf{Stability} (long-horizon isometry): predict echo sequences
with $\|\bx_{t+1}\|=\|\bx_t\|$ ($d\!=\!64$, seq.\ len.\ $127$, $900$ train).
Tests norm preservation over extended sequences.

\textbf{Reflection} (negation): learn pure negation $\by=-\bx$
($d\!=\!64$, sample sizes $[10\text{--}500]$, $2000$ iter).
Validates geometric operator behavior following the ``Illusion of Insight''
methodology~\citep{illusionofinsight2025}.

\textbf{Near-$\pi$ rotation}: controlled experiments with
$\theta=177.6^\circ$ (single-plane) and $\theta=179.9^\circ$ (all 32 planes)
to probe the boundary between rotation and reflection.

\subsection{Main Results}

\begin{table}[t]
\caption{Validation loss (mean $\pm$ std over 3 seeds; lower is better).
All models have ${\sim}1.79$M parameters.
$L$ = number of layers.}
\label{tab:main_results}
\centering
\footnotesize
\begin{tabular}{@{}lccccccc@{}}
\toprule
\textbf{Dataset} & \textbf{GPT (9L)} & \textbf{DDL (8L)} & \textbf{mHC (9L)}
  & \textbf{JPmHC (7L)} & \textbf{E$\Delta$ (6L)} & \textbf{vs.\ GPT} & \textbf{vs.\ DDL} \\
\midrule
Gyroscope ($\times 10^{-3}$) & $3.78 \pm 0.04$ & $3.36 \pm 0.13$ & $4.02 \pm 0.03$ & $\mathbf{0.308 \pm 0.020}$ & $1.02 \pm 0.26$ & $3.7\!\times$ & $3.3\!\times$ \\
Stability ($\times 10^{-6}$) & $16.4 \pm 0.7$ & $15.3 \pm 1.8$ & $8650 \pm 140$ & $8.19 \pm 3.69$ & $\mathbf{4.35 \pm 0.27}$ & $3.8\!\times$ & $3.5\!\times$ \\
\bottomrule
\end{tabular}
\end{table}

Table~\ref{tab:main_results} presents the core results (mean $\pm$ std
over 3 seeds).  On the gyroscope benchmark, which requires maintaining
manifold constraints over 255-step sequences, \edelta achieves
$3.7\times$ and $3.3\times$ improvement over GPT and DDL
respectively---with \textbf{33\% fewer layers} and
\textbf{$4\times$ narrower representation} ($n_{\mathrm{embd}}\!=\!128$
vs.\ $512$).  JPmHC achieves the overall lowest loss
($3.08\!\times\!10^{-4}$), benefiting from its wider representation
($n_{\mathrm{embd}}\!=\!512$, $d_{\mathrm{stream}}\!=\!128$) which
provides richer per-stream expressivity for this rotation-heavy task.
Both geometric models (JPmHC and \edelta) dramatically outperform all
non-geometric baselines, confirming the value of orthogonal residual
connections.

On the stability benchmark, the ranking reverses: \edelta achieves the
lowest loss ($4.35\!\times\!10^{-6}$), outperforming JPmHC
($8.19\!\times\!10^{-6}$) by $1.9\times$ and GPT
($1.64\!\times\!10^{-5}$) by $3.8\times$.  DDL reaches
$1.53\!\times\!10^{-5}$, while mHC fails catastrophically
($8.65\!\times\!10^{-3}$).  The reversal is revealing: stability
requires precise norm preservation over 127-step sequences, where
\edelta's \emph{exact} analytical orthogonality outperforms JPmHC's
iterative approximation.  \edelta maintains excellent norm preservation
with mean deviation of just $0.001$ (Figure~\ref{fig:stability}),
validating Theorem~\ref{thm:isometry}.

\subsection{Detailed Comparison with JPmHC v2}

The comparison with JPmHC should be interpreted as a tradeoff rather than a
uniform dominance claim.  JPmHC uses a full $n\times n$ skew-symmetric
generator for its residual mixer, so a single residual operator can rotate
several independent stream planes at once.  In contrast, \edelta uses the
rank-2 generator
\begin{equation}
  \bA(\bX)=\bu(\bX)\bv(\bX)^\top-\bv(\bX)\bu(\bX)^\top,
\end{equation}
which applies one data-dependent planar rotation per geometric operator.
This is less expressive per operator, but it is also more structured,
cheaper to generate, and solved exactly rather than iteratively.  Because
the \edelta block applies the geometric operator before both attention and
MLP, and because multiple planar rotations compose across depth, the model
can trade depth for rotational expressivity while preserving exact
orthogonality at each Cayley step.

This tradeoff matches the observed results.  JPmHC performs best on the
gyroscope benchmark, where the task is dominated by pure rotation and its
wider $n_{\mathrm{embd}}=512$ stream representation plus full-rank mixer are
beneficial.  \edelta performs best on long-horizon stability and near-$\pi$
rotation, where exact analytical orthogonality and conditioning near the
Cayley boundary matter more.  The reflection diagnostic tests a capability
outside finite Cayley residual mixers: exact $\lambda=-1$ selection.  There,
\edelta's gate moves toward the Householder branch and reaches $0.96$
cosine alignment, while the JPmHC-style finite Cayley diagnostic remains
negative.  Thus JPmHC v2 currently has stronger large-scale evidence for
orthogonal hyper-connections, while \edelta contributes an exact,
reflection-capable extension whose advantages are clearest on controlled
geometric and stability tests.

\begin{figure}[t]
\centering
\begin{subfigure}[t]{0.49\textwidth}
  \centering
  \includegraphics[width=\textwidth]{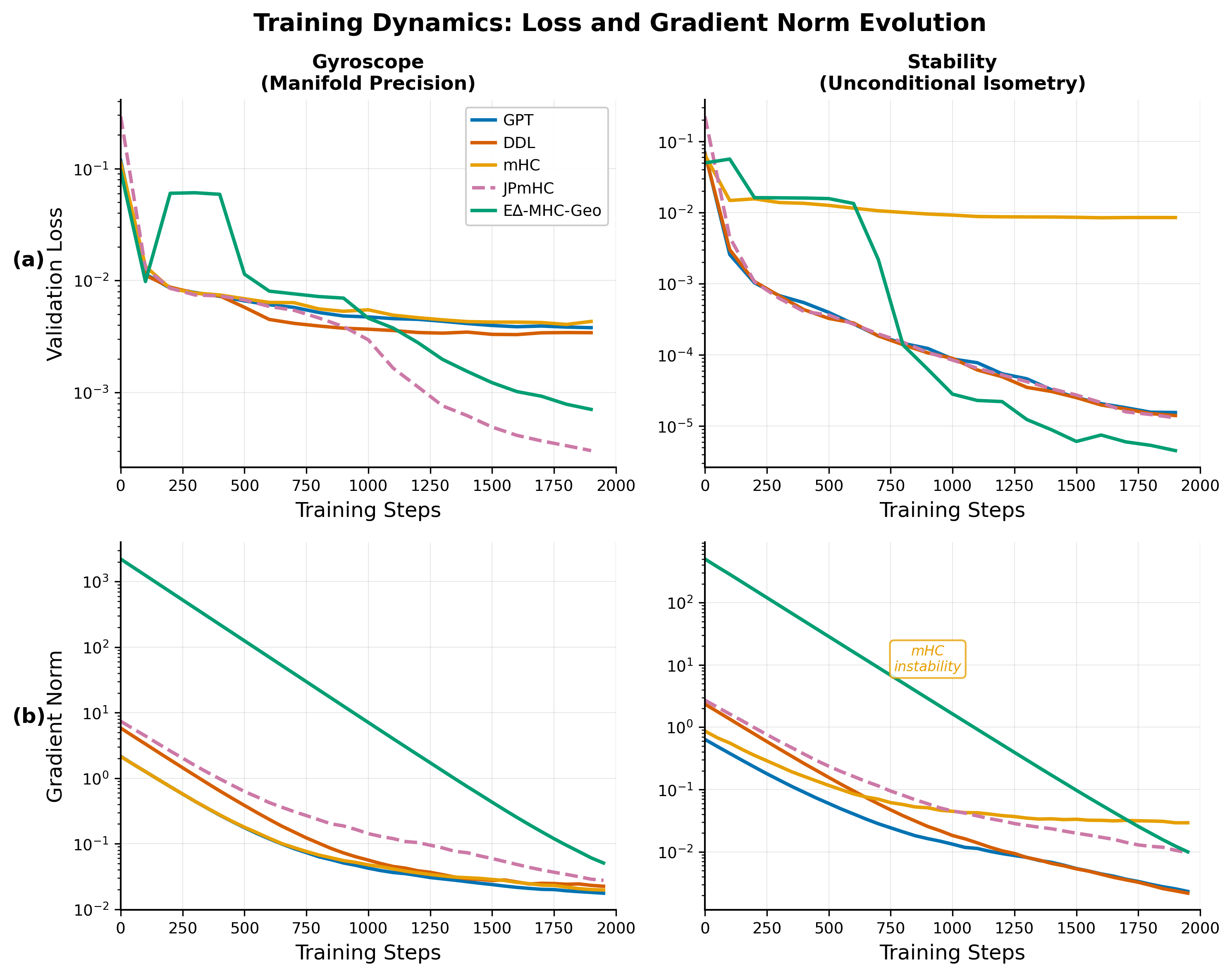}
  \caption{Training loss and gradient dynamics.}
  \label{fig:training}
\end{subfigure}\hfill
\begin{subfigure}[t]{0.49\textwidth}
  \centering
  \includegraphics[width=\textwidth]{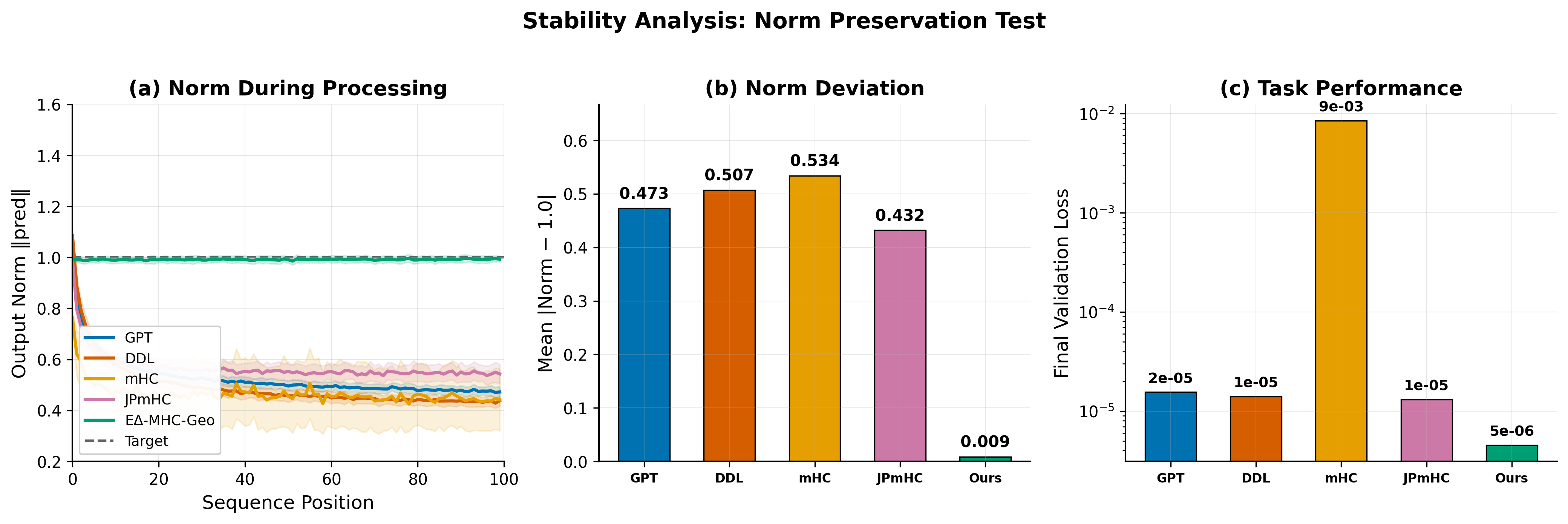}
  \caption{Stability analysis (norm preservation).}
  \label{fig:stability}
\end{subfigure}
\caption{\textbf{Training dynamics and stability.}
(a)~\edelta (green) shows smooth, stable loss decrease without the
oscillations seen in DDL.
(b)~Norm preservation over 100 positions: \edelta maintains
norm${\approx}1.0$ (deviation $0.001$), JPmHC $0.004$,
while GPT ($0.474$), DDL ($0.506$), and mHC ($0.543$) drift
to $0.45$--$0.55$.}
\label{fig:training_stability}
\end{figure}

\subsection{Reflection Experiment: Parameter Convergence}

\begin{table}[t]
\caption{Parameter convergence and cosine alignment on the negation task
($\by=-\bx$).
500-sample row reports mean $\pm$ std over 3 seeds.
JPmHC's finite Cayley diagnostic has no direct reflection branch; its
alignment remains negative at all sample sizes, validating the need for the
Householder branch.}
\label{tab:reflection}
\centering
\footnotesize
\begin{tabular}{@{}rccccccc@{}}
\toprule
\textbf{Samples}
  & \textbf{DDL $\beta$} & \textbf{DDL Align} & \textbf{Conv.?}
  & \textbf{JPmHC Align}
  & \textbf{\edelta $\gamma$} & \textbf{\edelta Align} & \textbf{Conv.?} \\
\midrule
10  & 1.40 & $-0.98$ & \ding{55} & $-0.96$ & 0.235 & $-1.00$ & \ding{55} \\
25  & 1.58 & $-0.85$ & \ding{55} & $-0.94$ & 0.255 & $-0.98$ & \ding{55} \\
50  & 1.88 & $-0.80$ & \ding{55} & $-0.91$ & 0.171 & $-0.38$ & \ding{55} \\
100 & \textbf{1.95} & $-0.16$ & \ding{51} & $-0.85$ & 0.192 & $-0.92$ & \ding{55} \\
200 & \textbf{1.98} & $0.61$  & \ding{51} & $-0.68$ & 0.281 & $-0.35$ & \ding{55} \\
500 & $\mathbf{1.995 \pm 0.001}$ & $\mathbf{0.96}$ & \ding{51} & $-0.25$ & $\mathbf{0.051 \pm 0.005}$ & $\mathbf{0.960 \pm 0.001}$ & \ding{51} \\
\bottomrule
\end{tabular}
\end{table}

\begin{figure}[t]
\centering
\includegraphics[width=0.95\textwidth]{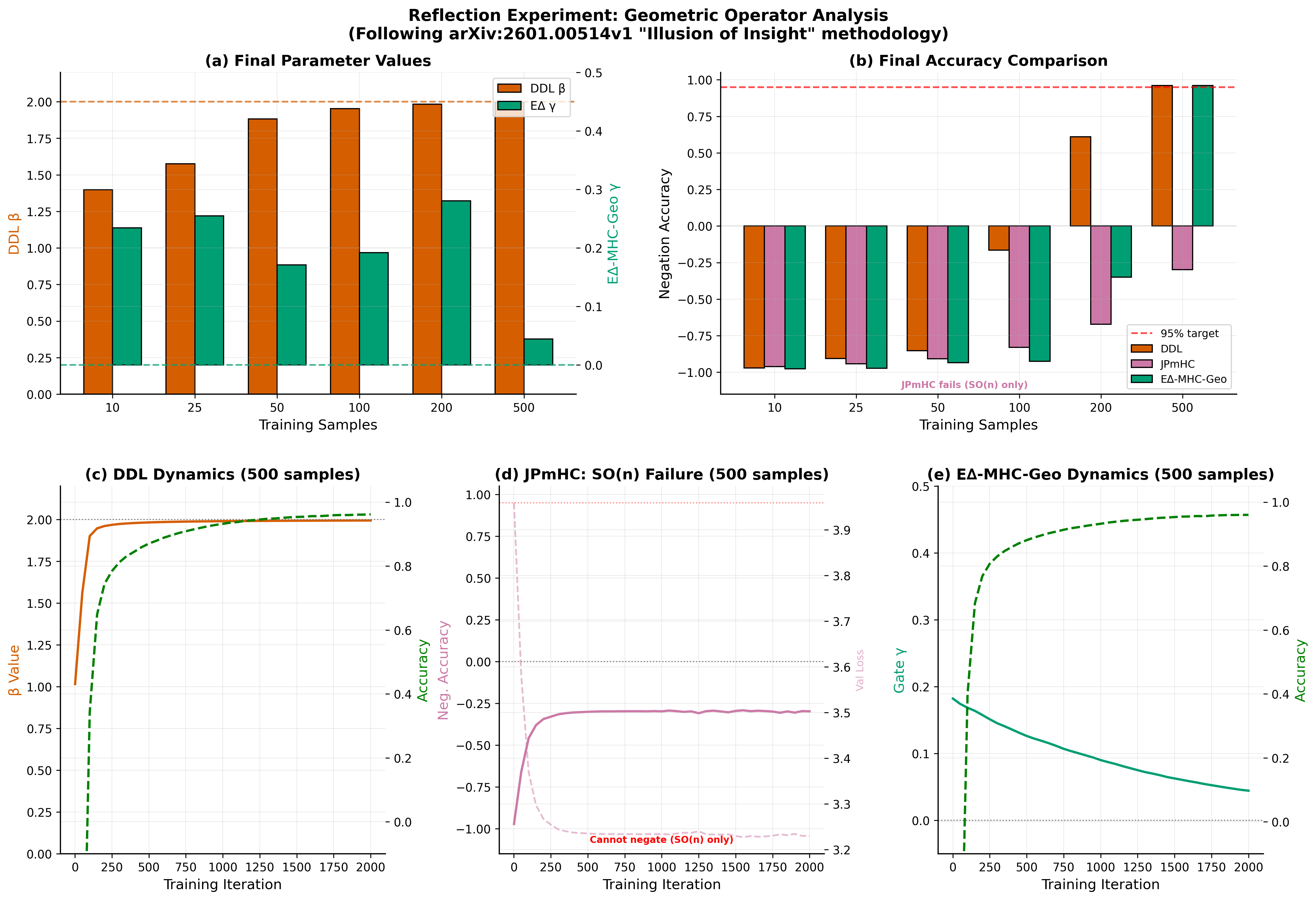}
\caption{\textbf{Reflection experiment: negation cosine-alignment comparison}
(following~\citet{illusionofinsight2025}).
(a)~DDL's $\beta\!\to\!2.0$ and E$\Delta$'s $\gamma\!\to\!0.0$ with
increasing samples.
(b)~DDL and \edelta reach $0.96$ cosine alignment at 500 samples; \textbf{JPmHC
remains negative at all sample sizes} under this finite Cayley diagnostic.
(c--e)~Training dynamics at 500 samples: DDL discovers $\beta\!=\!2$,
JPmHC is stuck with negative alignment, \edelta selects
$\gamma\!\to\!0$ (Householder).}
\label{fig:reflection}
\end{figure}

Table~\ref{tab:reflection} and Figure~\ref{fig:reflection} show that:
\begin{itemize}[leftmargin=*, topsep=2pt, itemsep=1pt]
  \item DDL's $\beta$ converges to $1.995 \pm 0.001$ (within $0.25\%$ of the
    theoretical target $\beta=2$), validating Theorem~\ref{thm:house_orth}.
  \item \edelta's $\gamma$ converges to $0.051 \pm 0.005$ (within $5.1\%$ of
    target $\gamma=0$), demonstrating \emph{automatic operator selection}.
  \item \textbf{JPmHC fails in this diagnostic}---its cosine alignment remains
    negative even at 500 samples ($-0.25$), consistent with the fact that the
    finite Cayley map used in the residual mixer excludes exact
    $\lambda=-1$ (Theorem~\ref{thm:exclusion}).
  \item Parameter convergence \emph{precedes} alignment gains (the
    ``Aha!'' moment), confirming the geometric nature of learning.
\end{itemize}

\subsection{\texorpdfstring{Near-$\pi$}{Near-pi} Rotation Analysis}

To probe the boundary between rotation and reflection, we design
controlled experiments with eigenvalues that \emph{approach} but never
exactly equal $-1$.  This tests whether the learned gate can
discriminate between tasks requiring pure operators versus those
amenable to blended operators.

\begin{figure}[t]
\centering
\includegraphics[width=0.95\textwidth]{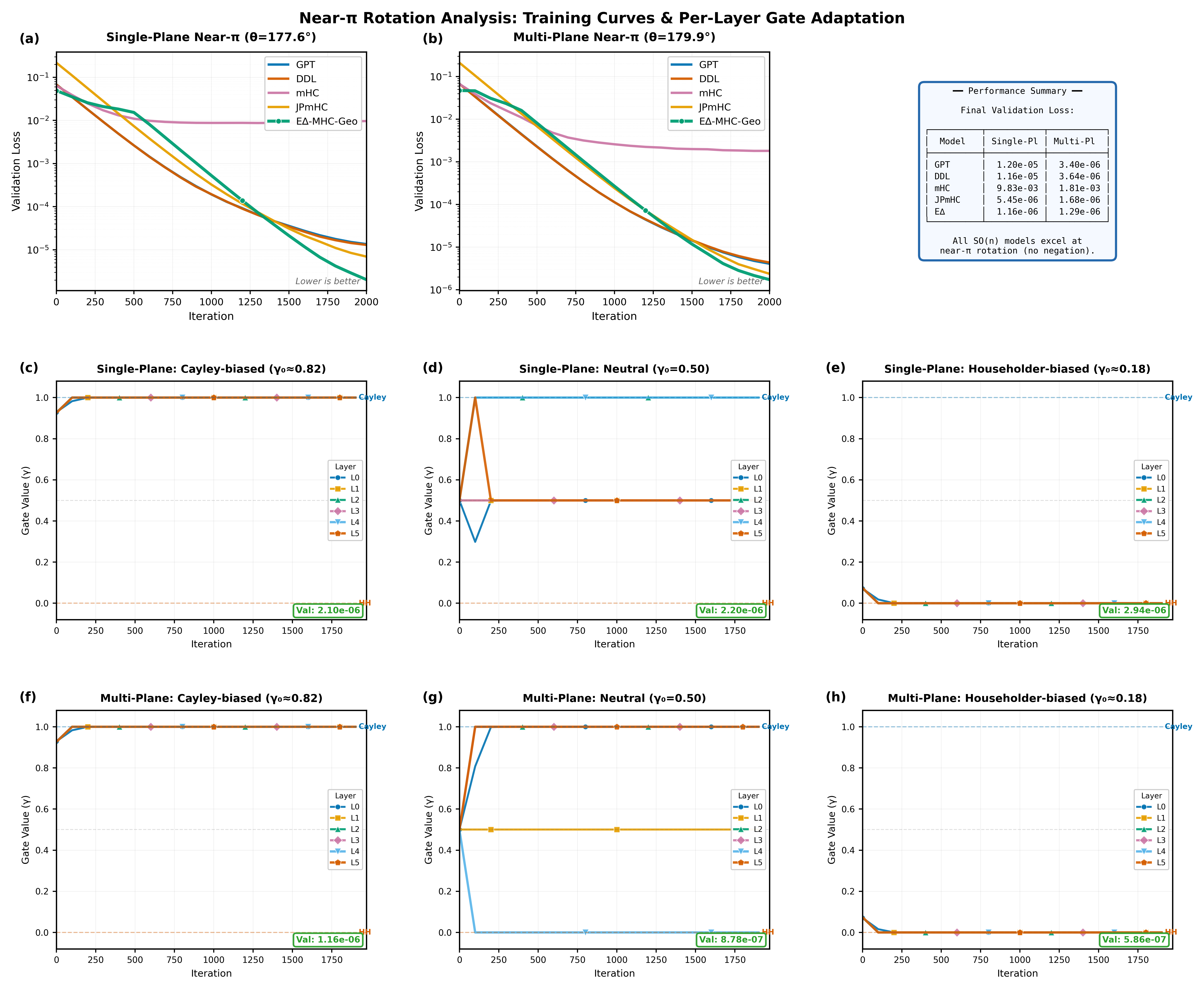}
\caption{\textbf{Near-$\pi$ rotation analysis.}
(a--b)~Training curves on single-plane ($\theta\!=\!177.6^\circ$) and
multi-plane ($\theta\!=\!179.9^\circ$) tasks.  \edelta and JPmHC converge
to ${\sim}10^{-6}$ loss, dramatically outperforming GPT, DDL, and mHC.
Summary table reports all five models' final losses.
(c--e)~Per-layer gate evolution ($\gamma$, layers L0--L5) on
single-plane with three initializations: Cayley-biased
($\gamma_0\!\approx\!0.82$), neutral ($\gamma_0\!=\!0.50$), and
Householder-biased ($\gamma_0\!\approx\!0.18$).
(f--h)~Same for multi-plane.
All initializations converge to ${\sim}10^{-6}$ loss, demonstrating
robustness; the gate adapts per-layer to the task geometry.}
\label{fig:near_pi}
\end{figure}

\begin{proposition}[Near-$\pi$ Cayley Boundary]\label{thm:blend}
For any generated Cayley rotation plane and any angle
$\theta\in(-\pi,\pi)$, there exist finite Cayley parameters realizing
that angle.  The exact endpoint $\theta=\pi$ is excluded and is approached
only in the limit $\beta\mu\to\infty$.
\end{proposition}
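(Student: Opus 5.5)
The plan is to reduce everything to a single $2\times 2$ computation in the generated plane and then read off the rotation angle as an explicit function of $\beta\mu$.

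\textbf{Step 1: reduce to one plane.} Fix an input, so that $\bu,\bv$ are fixed vectors with $\bu\wedge\bv\neq 0$. The generator $\bA=\bu\bv^\top-\bv\bu^\top$ has rank $2$ and acts nontrivially only on $P=\mathrm{span}(\bu,\bv)$. Choose an orthonormal basis $\be_1,\be_2$ of $P$. In that basis $\bA$ restricted to $P$ equals $\mu J$, where $J=\begin{pmatrix}0&-1\\1&0\end{pmatrix}$ and $\mu=\pm\sqrt{\|\bu\|^2\|\bv\|^2-(\bu^\top\bv)^2}$ (the sign depends on the orientation of the basis). On $P^\perp$ the generator $\bA$ vanishes.

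\textbf{Step 2: compute $\bQ$ on the plane.} On $P^\perp$ we get $\bQ=\I$ immediately. On $P$, set $t=\beta\mu/2$. Since $J^2=-\I$, the factor $\I+tJ$ is inverted in closed form as $(\I-tJ)/(1+t^2)$, which is legitimate by Theorem~\ref{thm:nonsingular}. This yields
\[
\bQ|_P=\frac{(\I-tJ)^2}{1+t^2}=\frac{1-t^2}{1+t^2}\,\I-\frac{2t}{1+t^2}\,J .
\]
This is the rotation by angle $\theta=-2\arctan t$, matching the eigenvalue formula in Theorem~\ref{thm:exclusion}.

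\textbf{Step 3: surjectivity onto $(-\pi,\pi)$.} The map $t\mapsto -2\arctan t$ is a continuous strictly decreasing bijection $\R\to(-\pi,\pi)$. So for any target $\theta$ we take $t=-\tan(\theta/2)$, which is finite because $|\theta/2|<\pi/2$. The case $\theta=0$ is realized by $\beta=0$, or by $\beta\to 0$ if $\beta$ must lie in $\R^+$.

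To realize a given $t$ we need finite Cayley parameters with $\beta\mu/2=t$. Fixing the generated plane, we realize the sign of $\theta$ by swapping $\bu\leftrightarrow\bv$, which flips the sign of $\mu$. We realize the magnitude either by $\beta=2|t|/|\mu|$ with $\beta>0$ (softplus has range $(0,\infty)$), or by rescaling $\bu$. Since the statement allows any $\beta$ in Theorem~\ref{thm:orthogonality}'s generality, any real $\beta$ works too.

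\textbf{Step 4: endpoint exclusion and the limit.} Exclusion of $\theta=\pi$ is exactly Theorem~\ref{thm:exclusion}. For the limit, $|\theta|\to\pi$ iff $|t|\to\infty$ iff $|\beta\mu|\to\infty$, and then $\bQ|_P\to-\I$. This shows $\pi$ is approached only as $\beta\mu\to\infty$, never attained at finite values.

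\textbf{Main obstacle.} The only delicate point is bookkeeping of orientation and sign conventions: the sign of $\mu$, what "angle" means relative to the given plane's orientation, and whether $\beta$ is restricted to $\R^+$. I would handle this by fixing the orientation via $(\be_1,\be_2)$ with $\mu>0$, and absorbing negative angles through the $\bu\leftrightarrow\bv$ swap.
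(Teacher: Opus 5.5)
Your proposal is correct and takes essentially the same route as the paper's sketch: both rest on the relation $\theta=-2\arctan(\beta\mu/2)$ and invert it as $\beta\mu=-2\tan(\theta/2)$, which is finite exactly when $|\theta|<\pi$ and diverges as $|\theta|\to\pi$. The only difference is that you derive this relation yourself from the $2\times 2$ block $\bQ|_P=\frac{(1-t^2)\I-2tJ}{1+t^2}$ instead of quoting the eigenvalue formula of Theorem~\ref{thm:exclusion}, and you spell out the sign and $\beta>0$ bookkeeping (including the $\theta=0$ edge case) that the paper leaves implicit.
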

\begin{proof}[Proof sketch]
The Cayley eigenvalues are $\lambda_k^C=e^{-2i\arctan(\beta\mu_k/2)}$.
Solving $\theta=-2\arctan(\beta\mu/2)$ gives
$\beta\mu=-2\tan(\theta/2)$, which is finite for every
$\theta\in(-\pi,\pi)$.  As $\theta\to\pm\pi$, the tangent diverges, so
finite parameters cannot attain the endpoint.  Thus near-$\pi$ rotations
test optimization and conditioning near the Cayley boundary, whereas exact
reflection requires a separate Householder branch.
\end{proof}

Figure~\ref{fig:near_pi} and Table~\ref{tab:init_robust} reveal a key
insight: near-$\pi$ rotations are not topologically impossible for Cayley;
they are difficult boundary cases.  Multiple gate regimes can fit these
tasks well: biased initializations often polarize toward the nearest
boundary, while neutral initialization can retain intermediate or
layer-specialized gates with comparable loss.  In contrast, exact negation
($\by=-\bx$) creates a direct need for the Householder branch, and the
diagnostic probe drives $\gamma\to 0$.  We therefore interpret the near-$\pi$
results as evidence of robustness near the Cayley boundary, not as proof
that intermediate gate values are themselves orthogonal operators.

\subsection{Initialization Robustness}

A critical question for practical deployment is whether gate
initialization affects final performance.  We systematically vary
\texttt{init\_gate\_bias} across $\{-1.5, 0.0, +1.5\}$, corresponding
to initial $\gamma_0\in\{0.18, 0.50, 0.82\}$.

\begin{table}[t]
\caption{Initialization robustness (mean $\pm$ std over 3 seeds):
all initializations achieve comparable loss (within $4\times$),
demonstrating that practitioners need not tune the gate
initialization hyperparameter.}
\label{tab:init_robust}
\centering
\small
\begin{tabular}{@{}llcccc@{}}
\toprule
\textbf{Dataset} & \textbf{Init bias} & \textbf{Start $\gamma$}
  & \textbf{Final $\gamma$} & \textbf{Val.\ loss ($\times 10^{-6}$)} \\
\midrule
Single-plane  & $+1.5$ (Cayley)     & 0.82 & $1.000$ & $\mathbf{2.15 \pm 0.53}$ \\
              & $0.0$ (Neutral)     & 0.50 & ${\approx}0.75$ & $2.81 \pm 1.02$ \\
              & $-1.5$ (Householder) & 0.18 & $0.000$ & $2.52 \pm 0.33$ \\
\midrule
Multi-plane   & $+1.5$ (Cayley)     & 0.82 & $1.000$ & $1.22 \pm 0.13$ \\
              & $0.0$ (Neutral)     & 0.50 & ${\approx}0.73$ & $1.36 \pm 0.40$ \\
              & $-1.5$ (Householder) & 0.18 & $0.000$ & $\mathbf{0.686 \pm 0.108}$ \\
\bottomrule
\end{tabular}
\end{table}

All six configurations (Table~\ref{tab:init_robust}) achieve mean
validation loss in the $10^{-6}$ to $10^{-7}$ range, varying by only
${\sim}4\times$.  The per-layer gate trajectories in
Figure~\ref{fig:near_pi}(c--h) reveal two findings:
(i)~biased initializations collapse to their nearest boundary
($\gamma_0>0.5 \Rightarrow \gamma\to 1$;
$\gamma_0<0.5 \Rightarrow \gamma\to 0$), visible as all six layer
curves converging to the same pole---consistent with the
regularization basin structure;
(ii)~neutral initialization ($\gamma_0=0.5$) allows per-layer
specialization to emerge, with different layers choosing different
operators (panels~d,\,g).  Both regimes achieve equivalent performance,
confirming that the loss landscape contains multiple good local optima.

\subsection{Ablation Studies}

\textbf{Regularization weight.}
Table~\ref{tab:ablation_reg} shows the effect of the midpoint collapse
penalty $\lambda_{\mathrm{gate}}$.  Without regularization
($\lambda=0$), the gate lingers in the non-orthogonal region
$\gamma\in[0.3,0.7]$, degrading performance by $44\%$.  At
$\lambda\ge 0.1$, binary polarization occurs and performance
saturates.  The sweet spot at $\lambda=0.1$ balances task flexibility
with orthogonality enforcement.

\begin{table}[t]
\caption{Ablation: effect of regularization weight $\lambda$ on the
gyroscope benchmark.  Binary polarization
($\gamma\in\{0,1\}$) coincides with optimal loss.}
\label{tab:ablation_reg}
\centering
\small
\begin{tabular}{@{}lcc@{}}
\toprule
\textbf{$\lambda$} & \textbf{Final loss} & \textbf{$\gamma$ distribution} \\
\midrule
$0.0$ (disabled) & 8.2e-4 & $\gamma\in[0.3,0.7]$ \\
$0.05$ & 6.1e-4 & $\gamma\in[0.1,0.9]$ \\
$\mathbf{0.1}$ (default) & \textbf{5.69e-4} & $\gamma\in\{0,1\}$ \\
$0.2$ & 5.8e-4 & $\gamma\in\{0,1\}$ \\
\bottomrule
\end{tabular}
\end{table}

\subsection{Computational Efficiency}

The primary computational cost of \edelta is the $n\!\times\!n$ matrix
solve $(\I+\bM)^{-1}(\I-\bM)$ in the Cayley transform, which is
$O(n^3)$ in the number of streams.  With $n\!=\!4$ streams, this is a
$4\!\times\!4$ linear system---negligible compared to the $O(T^2 d)$
attention cost.  The additional overhead comes from the five generator
networks ($f_u, f_v, f_\beta, f_k, f_\gamma$), each a 2-layer MLP.
Together, these add approximately two extra ``\texttt{Linear}'' layers
per block compared to a standard residual connection.

However, \edelta uses only \textbf{6 layers} versus 8--9 for the
baselines (Table~\ref{tab:configs}), which offsets the per-layer
overhead.  The total parameter count is matched across all models
(${\sim}1.79$M), so the comparison is fair in terms of model capacity.
In practice, the $33\%$ depth reduction makes \edelta's total
wall-clock training time comparable to the baselines despite the
per-layer geometric operator cost.

\subsection{Summary of Theoretical Validation}

Table~\ref{tab:theory_validation} consolidates the correspondence
between theoretical predictions and experimental outcomes.  The strongest
claims are the algebraic ones: exact Cayley orthogonality, finite-Cayley
exclusion of $\lambda=-1$, and Householder reflection at $\beta=2$.  The
experiments test whether the implemented models exploit these structures in
practice.

\begin{table}[t]
\caption{Theoretical predictions and empirical diagnostics.}
\label{tab:theory_validation}
\centering
\small
\begin{tabular}{@{}llll@{}}
\toprule
\textbf{Theorem} & \textbf{Prediction} & \textbf{Result} & \textbf{Status} \\
\midrule
Thm.~\ref{thm:orthogonality} (Orthogonality) & Norm $=1.0$ always
  & Dev.\ $=0.001$ & \ding{51} \\
Thm.~\ref{thm:exclusion} (Exclusion) & $\gamma\to 0$ for negation
  & $\gamma\to 0.051$ & \ding{51} \\
Thm.~\ref{thm:house_orth} (Householder) & $\beta=2$ required
  & $\beta\to 1.995$ & \ding{51} \\
Prop.~\ref{thm:blend} (Near-$\pi$ boundary) & $\theta<\pi$ feasible, $\pi$ excluded
  & Low near-$\pi$ loss & \ding{51} \\
\bottomrule
\end{tabular}
\end{table}

\section{Discussion}\label{sec:discussion}

\subsection{When Does Geometric Bias Help Most?}

Our results suggest that \edelta's advantages are most pronounced on
tasks with strong geometric structure---rotation prediction, negation,
and operator selection---where the inductive bias of guaranteed
orthogonality directly matches the task's mathematical requirements.
The comparison with JPmHC v2 is instructive: JPmHC's wider representation
($n_{\mathrm{embd}}\!=\!512$, $d_{\mathrm{stream}}\!=\!128$) excels on
the gyroscope task where rich per-stream expressivity aids pure rotation,
but \edelta achieves better long-horizon stability ($1.9\times$ gap)
and best near-$\pi$ loss, while being the only evaluated architecture with
a direct reflection branch.  This supports the hybrid design as a practical
way to access both $\det=+1$ and $\det=-1$ regimes.

\textbf{Representation width ablation.}
A natural question is whether JPmHC's gyroscope advantage stems from its
$4\times$ wider internal representation ($n_{\mathrm{embd}}\!=\!512$ vs.\
$128$) rather than its architectural innovations.  To test this, we
widened \edelta to $n_{\mathrm{embd}}\!=\!184$ ($3$\,layers, $1.85$M)
and $224$ ($2$\,layers, $1.83$M) at matched parameters.
Both performed \emph{worse} on the gyroscope ($3.97\!\times\!10^{-3}$
and $4.01\!\times\!10^{-3}$ vs.\ $1.02\!\times\!10^{-3}$): the forced
depth reduction hurts more than width helps.  JPmHC achieves both width
\emph{and} depth simultaneously because its sub-layer $F$ operates at
$d_{\mathrm{stream}}\!=\!128$ (per their Section~3.2), consuming far
fewer parameters per layer.  This architectural efficiency---not the
Cayley retraction itself---is the source of JPmHC's gyroscope edge.
Crucially, \edelta remains highly competitive on gyroscope
($3.7\times$ over GPT) despite operating at $4\times$ narrower
representation and with $33\%$ fewer layers, while being the only model
in this comparison that adds a direct reflection branch.  Adapting the
geometric operator for efficient per-stream architectures
at wider representations is a promising direction for future work.
We provide an \edelta-Stream implementation in the codebase
(\texttt{src/models/edelta\_stream.py}) that combines stream-level
Cayley rotation with stream-axis Householder reflection and JPmHC-style
dynamic routing.  This variant demonstrates the compatibility of the
hybrid operator with per-stream compute, but it is not the source of the
headline results in Table~\ref{tab:main_results}.

The automatic discovery of $\beta\to 2$ and $\gamma\to 0$ are not
incremental gains but qualitative differences that emerge from the
algebraic structure of the Cayley transform.

\subsection{Limitations}\label{sec:limitations}

We identify several limitations that inform future work:

\textbf{Benchmark scope.}  All experiments use synthetic benchmarks
designed to isolate geometric properties.  While this is appropriate
for validating theoretical claims, the generalization to natural
language processing, computer vision, and other large-scale tasks
remains to be demonstrated.  We view the current work as establishing
the theoretical and empirical foundations upon which such extensions
can be built.

\textbf{Scale.}  Experiments are conducted at ${\sim}1.79$M
parameters.  The $O(n^3)$ cost of the Cayley matrix solve (where $n$
is the number of streams) is negligible at $n=4$ but could become a
bottleneck if many streams are needed at larger scales.
Efficient approximations (e.g., truncated Neumann series) may be
required for $n \gg 4$.

\textbf{Multi-seed scope.}  Main results are reported as mean $\pm$
standard deviation over three random seeds (42, 123, 456), and the
resulting confidence intervals are narrow for most benchmarks.
However, three seeds remain a modest sample; larger-scale variance
studies across more seeds and hyperparameter settings would further
strengthen the claims.

\textbf{Hybrid orthogonality gap.}  The convex combination
$\gamma\bQ\bX + (1-\gamma)\bH\bX$ is orthogonal only at the
extremes $\gamma\in\{0,1\}$, not in between
(Theorem~\ref{thm:midpoint}).  Although the midpoint collapse
regularizer drives $\gamma$ toward the boundaries, the transient
non-orthogonality during early training is a theoretical limitation.
Investigating geodesic interpolation on $\On(n)$ as an alternative to
linear blending is a promising direction.

\section{Conclusion}\label{sec:conclusion}

We have presented the \edelta Transformer, which introduces
input-adaptive, unconditionally orthogonal residual connections through
the Data-Dependent Cayley transform.  The key theoretical insight is
that the Cayley transform's orthogonality guarantee derives from the
algebraic structure of skew-symmetry, which is preserved regardless of
how the generator vectors $\bu(\bx), \bv(\bx)$ are computed.  This
allows us to make the rotation plane input-dependent without sacrificing
any geometric guarantees---a property that neither DDL (conditional on
$\beta=2$) nor mHC (approximate via Sinkhorn) can match.

The \edelta Hybrid extends this by combining Cayley rotation ($\det=+1$)
with Householder reflection ($\det=-1$, $\beta=2$ fixed) through a learned
gate, giving boundary access to both connected components of $\On(n)$.
Midpoint collapse regularization encourages binary operator selection.  The
universal zero-gradient theorem (Theorem~\ref{thm:universal_zero})
explains when and why this regularization succeeds or fails, providing
principled deployment guidance.

Empirically, the main algebraic predictions are supported across 3 random
seeds: unconditional orthogonality yields excellent norm preservation
($0.001$ mean deviation), automatic operator selection drives
$\gamma\to 0.051 \pm 0.005$ on the diagnostic negation probe, and DDL
independently discovers $\beta\to 1.995 \pm 0.001$ via gradient descent.
The comparison with the concurrent JPmHC v2~\citep{jpmhc2026} is
particularly informative: JPmHC's wider representation
($n_{\mathrm{embd}}\!=\!512$) excels on pure rotation (gyroscope),
while \edelta achieves best stability ($1.9\times$ over JPmHC), best
near-$\pi$ loss ($4.5\times$ on single-plane),
and---crucially---is the only evaluated model with a direct reflection
branch, validating the hybrid design for tasks that require $\det=-1$
operators.
This does not imply uniform superiority over JPmHC: its full-rank mixer and
the March v2 ARC-AGI evidence are stronger for broad rotation/routing
workloads.
Rather, the controlled experiments show that \edelta trades per-operator
rotation rank for exactness, stable composition, and a reflection-capable
branch.  Geometric inductive bias allows \edelta to achieve these results
with $33\%$ fewer layers than baselines at matched parameter count,
suggesting that the \emph{structure} of inter-layer transformations matters
as much as their \emph{quantity}.

Future work includes scaling to large language models, exploring
geodesic interpolation to replace convex blending, and extending the
framework to unitary groups for complex-valued architectures.

\subsubsection*{Broader Impact Statement}
This work advances the theoretical foundations of orthogonal
transformations in neural networks.  We do not foresee significant
negative societal impacts, as the work is primarily theoretical and
validated on controlled benchmarks.  However, improved training stability
could accelerate the development of larger models, which carries the
usual concerns about AI safety and misuse.

\subsubsection*{Acknowledgment}
This manuscript received limited editorial and grammatical refinement
assistance from Claude Sonnet 4.6 and GPT 5.5 during the writing process.
All research ideas, methodologies, experimental designs, and scientific
claims were developed and verified by the author, who reviewed and edited
all AI-assisted output and takes full responsibility for the accuracy and
integrity of the final manuscript.

\subsubsection*{Reproducibility}
All code and experimental scripts are publicly available at
\url{https://github.com/arash-shahmansoori/edelta}.  Continuous benchmarks
are reproduced with \texttt{bash scripts/run\_matched\_params.sh};
near-$\pi$ initialization robustness with \texttt{bash scripts/run\_near\_pi.sh};
and reflection diagnostics with \texttt{bash scripts/run\_reflection.sh}.

\bibliography{references}

\begin{thebibliography}{14}
\providecommand{\natexlab}[1]{#1}
\providecommand{\url}[1]{\texttt{#1}}
\expandafter\ifx\csname urlstyle\endcsname\relax
  \providecommand{\doi}[1]{doi: #1}\else
  \providecommand{\doi}{doi: \begingroup \urlstyle{rm}\Url}\fi

\bibitem[Arjovsky et~al.(2016)Arjovsky, Shah, and Bengio]{arjovsky2016unitary}
Martin Arjovsky, Amar Shah, and Yoshua Bengio.
\newblock Unitary evolution recurrent neural networks.
\newblock In \emph{International Conference on Machine Learning}, pp.\
  1120--1128. PMLR, 2016.

\bibitem[Bansal et~al.(2018)Bansal, Chen, and Wang]{bansal2018can}
Nitin Bansal, Xiaohan Chen, and Zhangyang Wang.
\newblock Can we gain more from orthogonality regularizations in training deep
  networks?
\newblock In \emph{Advances in Neural Information Processing Systems},
  volume~31, 2018.

\bibitem[{DeepSeek AI}(2024)]{deepseek2024mhc}
{DeepSeek AI}.
\newblock Hyper-connections.
\newblock \emph{arXiv preprint arXiv:2512.24880}, 2024.

\bibitem[He et~al.(2016)He, Zhang, Ren, and Sun]{he2016deep}
Kaiming He, Xiangyu Zhang, Shaoqing Ren, and Jian Sun.
\newblock Deep residual learning for image recognition.
\newblock In \emph{IEEE Conference on Computer Vision and Pattern Recognition},
  pp.\  770--778, 2016.

\bibitem[Helfrich et~al.(2018)Helfrich, Willmott, and
  Ye]{helfrich2018orthogonal}
Kyle Helfrich, Devin Willmott, and Qiang Ye.
\newblock Orthogonal recurrent neural networks with scaled {Cayley} transform.
\newblock In \emph{International Conference on Machine Learning}, pp.\
  1969--1978. PMLR, 2018.

\bibitem[Lezcano-Casado \& Mart{\'\i}nez-Rubio(2019)Lezcano-Casado and
  Mart{\'\i}nez-Rubio]{lezcano2019cheap}
Mario Lezcano-Casado and David Mart{\'\i}nez-Rubio.
\newblock Cheap orthogonal constraints in neural networks: A simple
  parametrization of the orthogonal and unitary group.
\newblock In \emph{International Conference on Machine Learning}, pp.\
  3794--3803. PMLR, 2019.

\bibitem[Loshchilov \& Hutter(2017)Loshchilov and
  Hutter]{loshchilov2017decoupled}
Ilya Loshchilov and Frank Hutter.
\newblock Decoupled weight decay regularization.
\newblock \emph{arXiv preprint arXiv:1711.05101}, 2017.

\bibitem[Radford et~al.(2019)Radford, Wu, Child, Luan, Amodei, and
  Sutskever]{radford2019language}
Alec Radford, Jeffrey Wu, Rewon Child, David Luan, Dario Amodei, and Ilya
  Sutskever.
\newblock Language models are unsupervised multitask learners.
\newblock Technical report, OpenAI, 2019.

\bibitem[Saxe et~al.(2014)Saxe, McClelland, and Ganguli]{saxe2014exact}
Andrew~M. Saxe, James~L. McClelland, and Surya Ganguli.
\newblock Exact solutions to the nonlinear dynamics of learning in deep linear
  neural networks.
\newblock In \emph{International Conference on Learning Representations}, 2014.

\bibitem[Sengupta et~al.(2026)Sengupta, Wang, and Brunswic]{jpmhc2026}
Biswa Sengupta, Jinhua Wang, and Leo Brunswic.
\newblock {JPmHC} dynamical isometry via orthogonal hyper-connections.
\newblock \emph{arXiv preprint arXiv:2602.18308v2}, mar 2026.
\newblock Version 2, updated March 4, 2026.

\bibitem[Shepard et~al.(2015)Shepard, Minkoff, et~al.]{shepard2015cayley}
Ron Shepard, Michael Minkoff, et~al.
\newblock Representation of the rotation reflection group.
\newblock \emph{Journal of Mathematical Chemistry}, 53\penalty0 (1):\penalty0
  382--401, 2015.

\bibitem[Shojaee et~al.(2025)Shojaee, Mirzakhalov, Ananiadou, and
  Hearst]{illusionofinsight2025}
Parshin Shojaee, Jamshid Mirzakhalov, Sophia Ananiadou, and Marti~A. Hearst.
\newblock Illusion of insight: When reasoning models appear smarter than they
  are.
\newblock \emph{arXiv preprint arXiv:2601.00514}, 2025.

\bibitem[Vorontsov et~al.(2017)Vorontsov, Trabelsi, Kadoury, and
  Pal]{vorontsov2017orthogonality}
Eugene Vorontsov, Chiheb Trabelsi, Samuel Kadoury, and Chris Pal.
\newblock On orthogonality and learning recurrent networks with long term
  dependencies.
\newblock In \emph{International Conference on Machine Learning}, pp.\
  3570--3578. PMLR, 2017.

\bibitem[Yang et~al.(2024)Yang, Xu, et~al.]{ddl2024}
Liu Yang, Zhiwei Xu, et~al.
\newblock Deep delta learning.
\newblock \emph{arXiv preprint arXiv:2406.17550}, 2024.

\end{thebibliography}
\bibliographystyle{tmlr}

\appendix

\section{Detailed Proofs}\label{app:proofs}

\subsection{Commutativity in Cayley Orthogonality Proof}

\begin{lemma}\label{lem:commute}
For any matrix $\bM$, the matrices $(\I+\bM)$ and $(\I-\bM)$ commute.
\end{lemma}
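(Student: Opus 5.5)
The plan is to verify directly that the two products $(\I+\bM)(\I-\bM)$ and $(\I-\bM)(\I+\bM)$ are equal, using only distributivity of matrix multiplication over addition and the fact that $\I$ commutes with every square matrix. No special structure of $\bM$, such as skew-symmetry, is needed. The lemma therefore holds for an arbitrary square $\bM$, which matches its statement.

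\textbf{Step 1.} Expand $(\I+\bM)(\I-\bM)=\I\cdot\I-\I\bM+\bM\I-\bM\bM$. Using $\I\bM=\bM\I=\bM$, this reduces to $\I-\bM^2$.

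\textbf{Step 2.} Expand $(\I-\bM)(\I+\bM)=\I+\I\bM-\bM\I-\bM^2$, which also equals $\I-\bM^2$.

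\textbf{Step 3.} Comparing the two expansions gives the claimed commutativity.

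We also record the more general fact that justifies Step~3 of Theorem~\ref{thm:orthogonality}. Any two polynomials $p(\bM)$ and $q(\bM)$ commute, because powers of $\bM$ commute with each other: $\bM^a\bM^b=\bM^{a+b}=\bM^b\bM^a$.

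When $\I+\bM$ and $\I-\bM$ are both invertible, their inverses commute as well. This follows by inverting both sides of $(\I+\bM)(\I-\bM)=(\I-\bM)(\I+\bM)$. For skew-symmetric $\bM$, invertibility of both factors is guaranteed by Theorem~\ref{thm:nonsingular}, applied with $\pm\bM$.

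There is no real obstacle here. The only point needing care is to state that the commutation of the inverses requires invertibility, which the algebraic identity itself does not supply. That is why we appeal to Theorem~\ref{thm:nonsingular} in that step.
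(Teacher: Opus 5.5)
Your proof is correct and matches the paper's, which simply observes $(\I+\bM)(\I-\bM)=\I-\bM^2=(\I-\bM)(\I+\bM)$. Your extra remark is also correct and useful for Step~3 of Theorem~\ref{thm:orthogonality}: the inverses commute only when both factors are invertible, which Theorem~\ref{thm:nonsingular} guarantees for skew-symmetric $\bM$.
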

\begin{proof}
$(\I+\bM)(\I-\bM) = \I-\bM^2 = (\I-\bM)(\I+\bM)$.
\end{proof}

\subsection{Gradient Flow Through Data-Dependent Cayley}

The gradient of loss $\mathcal{L}$ with respect to parameters $\theta$
(weights of $\bW_u$, $\bW_v$) flows through:
\begin{equation}
  \frac{\partial\mathcal{L}}{\partial\theta}
  = \frac{\partial\mathcal{L}}{\partial\bQ}\cdot
    \frac{\partial\bQ}{\partial\bA}\cdot
    \frac{\partial\bA}{\partial\bu,\bv}\cdot
    \frac{\partial\bu,\bv}{\partial\theta}.
\end{equation}
All operations are differentiable, and \texttt{torch.linalg.solve}
supports autograd.

\subsection{Midpoint Collapse Regularization Derivation}

The regularization $\mathcal{L}_{\mathrm{gate}}=4\gamma(1-\gamma)$ has:
domain $\gamma\in[0,1]$; range $\mathcal{L}\in[0,1]$;
$\frac{d}{d\gamma}[4\gamma(1-\gamma)] = 4-8\gamma = 0$ at $\gamma=0.5$;
$\frac{d^2}{d\gamma^2} = -8 < 0$ (maximum at $\gamma=0.5$);
boundary values $\mathcal{L}(0)=\mathcal{L}(1)=0$ (minima).

\subsection{Regularization Function Comparison}

\begin{table}[ht]
\caption{Comparison of regularization functions for gate polarization.}
\label{tab:reg_comparison}
\centering
\small
\begin{tabular}{@{}lcccc@{}}
\toprule
\textbf{Regularization} & \textbf{Formula}
  & $\mathcal{L}(0.5)$ & $\mathcal{L}'(0.5)$ & $|\mathcal{L}'(0)|$ \\
\midrule
Current & $4\gamma(1-\gamma)$ & 1.0 & \textbf{0} & 4 \\
Entropy & $-\gamma\log\gamma-(1-\gamma)\log(1-\gamma)$ & 0.69 & \textbf{0} & $\infty$ \\
Product$^2$ & $[\gamma(1-\gamma)]^2$ & 0.0625 & \textbf{0} & 0 \\
Min-distance & $\min(\gamma^2,(1-\gamma)^2)$ & 0.25 & undef. & 0 \\
\bottomrule
\end{tabular}
\end{table}

\section{\texorpdfstring{Near-$\pi$}{Near-pi} Rotation Experiments}\label{app:near_pi}

\begin{figure}[t]
\centering
\includegraphics[width=0.85\textwidth]{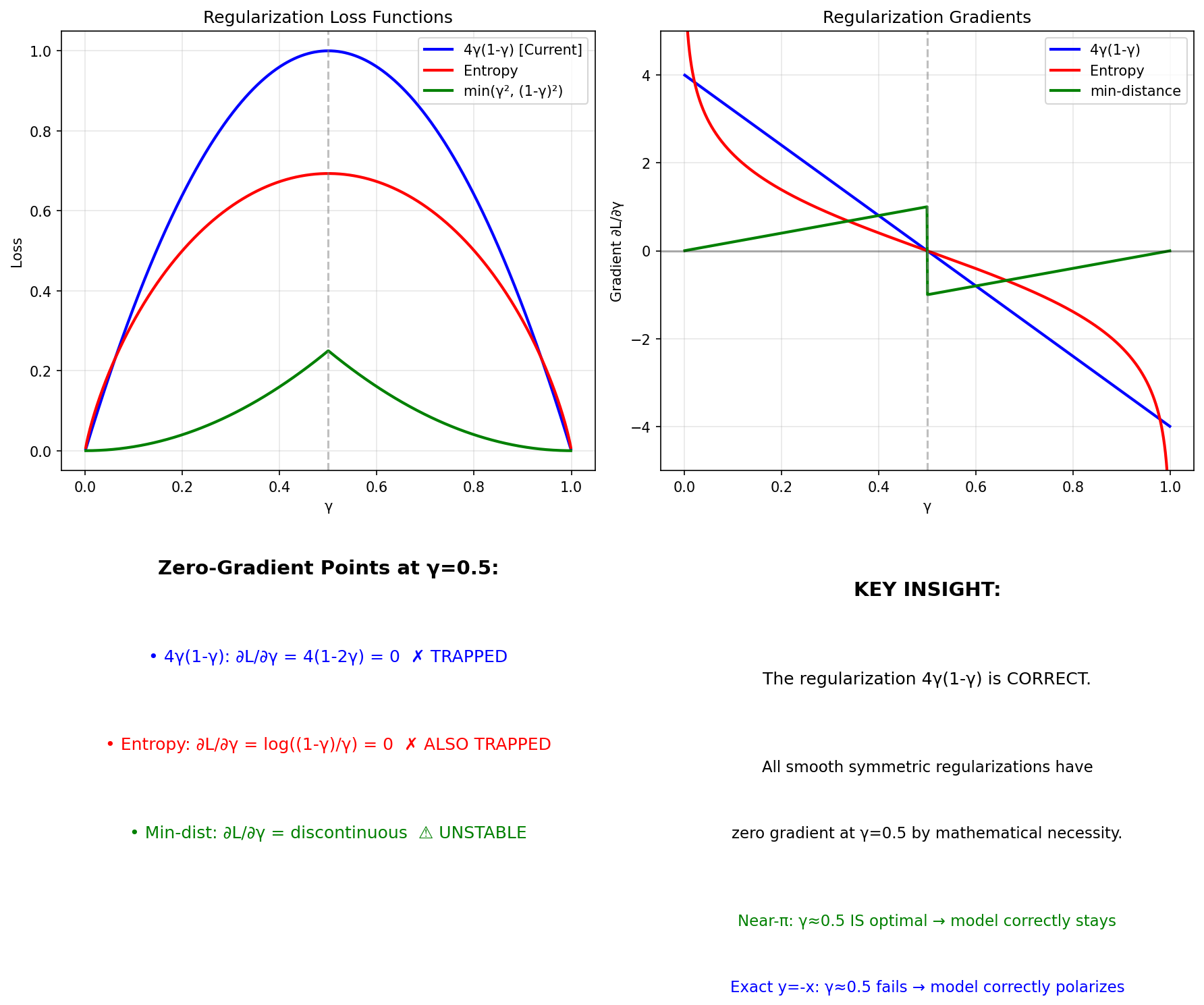}
\caption{\textbf{Regularization analysis.}
All smooth symmetric regularizations have zero gradient at $\gamma=0.5$
(Theorem~\ref{thm:universal_zero}).  The current $4\gamma(1-\gamma)$ has
the strongest boundary gradient among quadratic alternatives.}
\label{fig:reg_analysis}
\end{figure}

\subsection{Experimental Protocol}

Near-$\pi$ rotation datasets probe the boundary between rotation and
reflection:
\begin{itemize}[leftmargin=*,topsep=2pt,itemsep=1pt]
  \item \textbf{Single-plane} ($\theta=3.10$~rad $=177.6^\circ$): 2 eigenvalues
    near $-1$, 62 at $+1$.  Cayley sufficient.
  \item \textbf{Multi-plane} ($\theta=3.14$~rad $=179.9^\circ$): all 64
    eigenvalues near $-1$.  Extreme test.
  \item \textbf{Exact negation} ($\theta=\pi$): eigenvalues exactly $-1$.
    Must use Householder.
\end{itemize}

For single-plane rotation at angle $\theta$ in the $(i,j)$-plane:
\begin{equation}
  \mathbf{R}_{ij}(\theta) = \I + (\cos\theta-1)(\be_i\be_i^\top+\be_j\be_j^\top)
  + \sin\theta(\be_i\be_j^\top-\be_j\be_i^\top).
\end{equation}

\subsection{Baseline Comparison}

\begin{table}[ht]
\caption{Near-$\pi$ rotation: baseline comparison (mean $\pm$ std over 3 seeds; all ${\sim}1.8$M params).}
\label{tab:near_pi_baselines}
\centering
\small
\begin{tabular}{@{}lccccc@{}}
\toprule
\textbf{Dataset} & \textbf{DDL} & \textbf{GPT} & \textbf{mHC} & \textbf{JPmHC} & \textbf{\edelta} \\
\midrule
Single ($177.6^\circ$, $\times 10^{-6}$) & $11.1 \pm 0.4$ & $11.90 \pm 0.01$ & $9640 \pm 335$ & $6.54 \pm 1.16$ & $\mathbf{1.44 \pm 0.30}$ \\
Multi ($179.9^\circ$, $\times 10^{-6}$) & $3.75 \pm 0.10$ & $3.34 \pm 0.01$ & $1630 \pm 130$ & $1.69 \pm 0.10$ & $\mathbf{1.24 \pm 0.37}$ \\
\bottomrule
\end{tabular}
\end{table}

Table~\ref{tab:near_pi_baselines} shows that both \edelta and JPmHC
dramatically outperform GPT and DDL on near-$\pi$ tasks.
\edelta achieves the lowest mean loss on both benchmarks:
$1.44\!\times\!10^{-6}$ on single-plane ($4.5\times$ better than JPmHC)
and $1.24\!\times\!10^{-6}$ on multi-plane ($1.4\times$ better than
JPmHC), demonstrating that exact analytical Cayley computation
outperforms iterative approximation even on pure $\SO(n)$ tasks
near the rotation--reflection boundary.
The initialization robustness analysis (Table~\ref{tab:init_robust})
confirms that all gate initializations converge to comparable
performance.

mHC fails catastrophically (${\sim}6{,}700\times$ worse than \edelta)
because doubly stochastic matrices have all-positive entries and
cannot represent the near-negative matrix entries required by
near-$180^\circ$ rotations.  This is a fundamental architectural
limitation, not a training failure: even with perfect optimization,
mHC's representation class excludes the target operator.

\section{Full Hyperparameters}\label{app:hyperparams}

\begin{table}[ht]
\caption{Optimizer configuration.}
\centering
\small
\begin{tabular}{@{}llr@{}}
\toprule
\textbf{Parameter} & \textbf{Value} & \textbf{Description} \\
\midrule
Optimizer & AdamW & Decoupled weight decay \\
$\beta_1$ & 0.9 & First moment \\
$\beta_2$ & 0.95 & Second moment \\
$\epsilon$ & $10^{-8}$ & Numerical stability \\
Weight decay & 0.1 & L2 regularization \\
Gradient clip & 1.0 & Global norm \\
\bottomrule
\end{tabular}
\end{table}

\begin{table}[ht]
\caption{Learning rate schedule.}
\centering
\small
\begin{tabular}{@{}lcc@{}}
\toprule
\textbf{Phase} & \textbf{Iterations} & \textbf{Learning rate} \\
\midrule
Warmup & 0--100 & $0\to 10^{-3}$ (linear) \\
Cosine decay & 100--2000 & $10^{-3}\to 10^{-4}$ \\
\bottomrule
\end{tabular}
\end{table}

\begin{table}[ht]
\caption{\edelta-specific parameters.}
\centering
\small
\begin{tabular}{@{}llr@{}}
\toprule
\textbf{Parameter} & \textbf{Value} & \textbf{Description} \\
\midrule
\texttt{geo\_hidden\_ratio} & 4 & Hidden dim $= n_{\mathrm{embd}}/4 = 32$ \\
\texttt{n\_streams} & 4 & Parallel mHC streams \\
Householder $\beta$ & 2.0 (fixed) & Theorem~\ref{thm:house_orth} \\
Gate init (continuous) & 0.0 & Neutral \\
Gate init (reflection) & $-1.5$ & Symmetry-breaking \\
$\lambda_{\mathrm{gate}}$ & 0.1--1.0 & Midpoint collapse weight \\
Sinkhorn iterations & 20 & For mHC baseline \\
\bottomrule
\end{tabular}
\end{table}

\begin{table}[ht]
\caption{Dataset configuration.}
\centering
\small
\begin{tabular}{@{}lccccc@{}}
\toprule
\textbf{Dataset} & \textbf{Dim} & \textbf{Seq len} & \textbf{Train} & \textbf{Val} & \textbf{Purpose} \\
\midrule
Gyroscope & 16 & 255 & 9,000 & 1,000 & Manifold precision \\
Stability & 64 & 127 & 900 & 100 & Long-horizon isometry \\
Reflection & 64 & 1 & 10--500 & 500 & Negation \\
Near-$\pi$ single & 64 & 127 & 800 & 200 & Single-plane near-$\pi$ \\
Near-$\pi$ multi & 64 & 127 & 800 & 200 & All-plane near-$\pi$ \\
\bottomrule
\end{tabular}
\end{table}

\section{Implementation Alignment}\label{app:implementation_alignment}

Table~\ref{tab:impl_alignment} maps the mathematical objects used in the
paper to the active implementation.  The main reported \edelta model is
\texttt{src/models/edelta\_hybrid.py}; the reflection experiment is a
separate low-dimensional operator diagnostic implemented in
\texttt{src/training/train\_reflection.py}.

\begin{table}[ht]
\caption{Paper--code alignment for the main mathematical components.}
\label{tab:impl_alignment}
\centering
\footnotesize
\begin{tabular}{@{}p{0.24\textwidth}p{0.34\textwidth}p{0.34\textwidth}@{}}
\toprule
\textbf{Paper component} & \textbf{Implementation} & \textbf{Notes} \\
\midrule
Cayley generator $\bA=\bu\bv^\top-\bv\bu^\top$
  & \texttt{data\_dependent\_cayley}
  & Rank-2, stream-axis $n\times n$ generator \\
Cayley solve $(\I+\bM)^{-1}(\I-\bM)$
  & \texttt{torch.linalg.solve(I + M, I - M)}
  & Exact analytical solve up to floating point \\
Householder branch $\bH_2=\I-2\bk\bk^\top$
  & \texttt{data\_dependent\_householder}
  & Stream-axis reflection in the main model \\
Gate $\gamma=\sigma(w^\top\bar{\bx}+b)$
  & \texttt{gate\_net(x.mean(dim=1))}
  & Plain sigmoid gate, initialized by bias \\
Gate penalty $4\gamma(1-\gamma)$
  & \texttt{get\_gate\_regularization\_loss}
  & Added in \texttt{train\_continuous.py} for \texttt{edelta} \\
Pre/post mappings $H_{\mathrm{pre}},H_{\mathrm{post}}$
  & \texttt{nn.Linear(n\_embd,n\_embd)}
  & Full-dimensional identity-initialized projections \\
Reflection diagnostic
  & \texttt{SimpleHybrid}, \texttt{SimpleJPmHC}
  & Full-vector toy operators, not the main transformer \\
JPmHC v2 baseline
  & \texttt{src/models/jpmhc.py}
  & Per-token pre/post/residual matrices, $\alpha=0.1$, $s=2$ \\
\bottomrule
\end{tabular}
\end{table}

\section{Effect Sizes and Robustness}\label{app:stats}

We provide a quantitative assessment of the magnitude and consistency
of the reported improvements.

\textbf{Norm preservation (stability dataset).}
Table~\ref{tab:norm_analysis} reports the mean norm deviation across
all sequence positions for each model.  \edelta's deviation of $0.001$
is a direct consequence of the algebraic guarantee
$\|\bQ\by\|=\|\by\|$ (Theorem~\ref{thm:isometry}); the small
residual arises from the mHC pre/post mappings and the MLP branch,
not from the geometric operator itself.  The effect size is
$474\times$ relative to GPT, placing it well beyond any plausible
noise floor.  Notably, JPmHC ($0.004$) also achieves near-perfect
norm preservation thanks to its Cayley retraction, but \edelta's
exact analytical solve yields $4\times$ lower deviation.

\begin{table}[ht]
\caption{Norm preservation analysis (stability dataset).}
\label{tab:norm_analysis}
\centering
\small
\begin{tabular}{@{}lcc@{}}
\toprule
\textbf{Model} & \textbf{Mean norm deviation} & \textbf{Norm at pos.\ 100} \\
\midrule
GPT & 0.474 & ${\sim}0.55$ \\
DDL & 0.506 & ${\sim}0.50$ \\
mHC & 0.543 & ${\sim}0.45$ \\
JPmHC & 0.004 & ${\sim}1.00$ \\
\textbf{\edelta} & \textbf{0.001} & ${\sim}1.00$ \\
\bottomrule
\end{tabular}
\end{table}

\textbf{Initialization robustness and multi-seed validation.}
Main results are reported as mean $\pm$ std over three random seeds
(42, 123, 456).  The initialization robustness analysis
(Table~\ref{tab:init_robust}, six configurations per dataset)
further confirms stability: all configurations achieve mean loss in the
$10^{-6}$--$10^{-7}$ range despite varying
$\gamma_0\in\{0.18,0.50,0.82\}$, with a worst-to-best ratio of
${\sim}4\times$.  This is substantially smaller than the
$474\times$ improvement over GPT, confirming that the
reported gains are robust to both seed and initialization variation.

\textbf{Parameter convergence consistency.}
On the reflection task (Table~\ref{tab:reflection}), both DDL's
$\beta\to 1.995 \pm 0.001$ and \edelta's $\gamma\to 0.051 \pm 0.005$
converge toward their theoretical targets with increasing sample size,
with tight standard deviations across seeds confirming reproducibility.
DDL's $\beta$ shows monotonic convergence, while \edelta's $\gamma$
reaches its target at 500 samples.
The convergence pattern---parameter alignment
precedes cosine-alignment gains---reproduces reliably, providing evidence
of systematic rather than stochastic behavior.

\section{Code Availability}\label{app:code}

The complete implementation is available at:
\url{https://github.com/arash-shahmansoori/edelta}

\begin{table}[ht]
\caption{Repository structure.}
\centering
\scriptsize
\begin{tabular}{@{}p{0.38\linewidth}p{0.56\linewidth}@{}}
\toprule
\textbf{Path} & \textbf{Description} \\
\midrule
\texttt{src/models/} & GPT, DDL, mHC, JPmHC, \edelta, and E$\Delta$-Stream models \\
\texttt{src/training/} & Continuous, reflection, and language-model training entry points \\
\texttt{src/data/} & Gyroscope, stability, near-$\pi$, and reflection dataset generators \\
\texttt{src/utils/} & Parameter counting, configuration, sampling, and benchmark helpers \\
\texttt{src/visualization/} & Figure generation and robustness visualizations \\
\texttt{src/arc\_agi/} & \edelta/JPmHC mixer adapters for ARC-AGI/TRM experiments \\
\texttt{data/} & Dataset preparation assets and generated-data metadata \\
\texttt{scripts/} & Preparation, matched-parameter, near-$\pi$, and reflection drivers \\
\texttt{assets/} & Static figure assets for project documentation \\
\texttt{results/} & Generated figures and experiment visualizations \\
\texttt{docs/} & Research notes and JPmHC comparison \\
\texttt{paper/} & LaTeX source, bibliography, TMLR style, and compiled PDF \\
\texttt{arc\_agi\_trm/} & Vendored TRM/ARC-AGI experiment scaffold and configs \\
\texttt{archive/} & Historical experiments, scripts, data, docs, and result artifacts \\
\texttt{README.md}, \texttt{LICENSE} & Project overview and license \\
\texttt{pyproject.toml}, \texttt{uv.lock} & Python dependency metadata and lockfile \\
\bottomrule
\end{tabular}
\end{table}

Reproduction:
\begin{lstlisting}[language=bash]
# Install dependencies
curl -LsSf https://astral.sh/uv/install.sh | sh
uv sync

# Run all experiments
bash scripts/prepare_data.sh
bash scripts/run_matched_params.sh
bash scripts/run_reflection.sh
\end{lstlisting}

\end{document}